\newcommand*{\boldone}{\text{\usefont{U}{bbold}{m}{n}1}}
\newcommand{\boldee}{\mathbb{E}}
\newtheorem{assumption}{Assumption}
\newtheorem{lemma}{Lemma}
\newtheorem{definition}{Definition}
\newtheorem{theorem}{Theorem}
\newcommand{\tildep}{\widetilde{p}}
\newcommand{\bfD}{\mathbf{D}}
\newcommand{\calD}{\mathcal{D}}
\newcommand{\btheta}{\pmb{\theta}}
\newcommand{\bxi}{\pmb{\xi}}
\newcommand{\calC}{\mathcal{C}}
\newcommand{\tildecalC}{\widetilde{\calC}}
\newcommand{\tildeCoverage}{\tildecalC_{\alpha}}
\newcommand{\bthetaRef}{\btheta_{\text{ref}}}
\newcommand{\bphi}{\pmb{\phi}}
\newcommand{\qVI}{q_{\text{VI}}}
\newcommand{\bphis}{\bphi^{*}}
\newcommand{\tildebg}{\widetilde{\pmb{g}}}
\newcommand{\clip}[1]{\mathrm{clip}\left(#1\right)}
\begin{document}

\twocolumn[
\papertitle{Noise-Aware Differentially Private Variational Inference}
\paperauthor{ Talal Alrawajfeh \And Joonas J\"{a}lk\"{o} \And  Antti Honkela }
\paperaddress{ University of Helsinki \\ \texttt{talal.alrawajfeh@helsinki.fi} \And  University of Helsinki \\ \texttt{joonas.jalko@helsinki.fi} \And University of Helsinki \\ 
\texttt{antti.honkela@helsinki.fi}} 
]

\begin{abstract}
   Differential privacy (DP) provides robust privacy guarantees for statistical inference, but this can lead to unreliable results and biases in downstream applications. While several noise-aware approaches have been proposed which integrate DP perturbation into the inference, they are limited to specific types of simple probabilistic models.
   In this work, we propose a novel method for noise-aware approximate Bayesian inference based on stochastic gradient variational inference which can also be applied to high-dimensional and non-conjugate models. We also propose a more accurate evaluation method for noise-aware posteriors. 
   Empirically, our inference method has similar performance to existing methods in the domain where they are applicable. 
   Outside this domain, we obtain accurate coverages on high-dimensional Bayesian linear regression and well-calibrated predictive probabilities on Bayesian logistic regression with the UCI Adult dataset.
\end{abstract}

\section{INTRODUCTION}

When applying Bayesian inference on sensitive data, one needs to consider the
privacy risks the released results might pose. Multiple methods combining the
state-of-the-art privacy paradigm differential privacy (DP) \citep{DworkMNS06}
with Bayesian inference have been proposed in the past. These include methods that are based on, e.g., Markov chain Monte Carlo (MCMC)
\citep{WangFS15, HeikkilaJDH19} and variational inference (VI) \citep{dpviinf, dpvim}.
\vspace{2cm}

Many of these methods do not consider how the additional noise due to DP affects the
learned posteriors, which might lead to poor uncertainty quantification. As a
solution, multiple works trying to explicitly model the DP noise have since been
proposed \citep{BernsteinS18, BernsteinS19, JuAGR22}, but these are limited to simple or small problems.

In this work, we propose a highly scalable noise-aware DP Bayesian
inference approach that is applicable to a broad range of models. Our method
uses DP variational inference (DPVI) \citep{dpviinf, dpvim}
to obtain noisy gradients from an approximate Bayesian inference problem. 
Modelling the DP noise in the gradients, we form a probabilistic model connecting
the noisy gradients to an optimal solution of the VI problem. Finally, learning the
noise-aware posterior for the optimum, we can form a noise-aware posterior for the original Bayesian inference problem.

\textbf{Related work:}
Several methods have been proposed to take the DP induced noise into account in
private Bayesian inference. Earlier works include sufficient statistic based
models such as noise-aware exponential family models \citep{BernsteinS18} and
Bayesian linear regression \citep{BernsteinS19}. In these works, the data is
released through noisy sufficient statistics, and the perturbation noise is
included as a part of the data generation process. Using approximate sufficient
statistics, \cite{KulkarniJKKH21} extended the noise-awareness for Bayesian
generalized linear models \citep{PASSGLMHuggins2017}. \cite{Gong22} proposed a general summary statistics--based approach using approximate Bayesian computation (ABC). \cite{JuAGR22}
proposed a Gibbs sampling based approach, where the latent confidential data is
augmented into the inference model. Their approach is applicable for a wide
range of models; however, the cost of the inference process scales by the number of
samples in the data. \cite{XiongJZ23} proposed a method which fits a normalizing
flow as a surrogate for the true posterior, by iteratively drawing the model
parameters from the surrogate, drawing the latent confidential data conditioned
on the proposed parameter, and finally updating the surrogate model based on the
observed noisy summary statistics. As an important application of DP
probabilistic modelling, \cite{RaisaJKH23} proposed a synthetic data generation
method, which is based on learning a noise-aware posterior for a discrete
marginal query based summary statistic.

\textbf{Contributions:}
\begin{compactenum}
    \item We propose a theoretical framework for noise-aware inference and amend the theory of \citet{pmlr-v202-lemos23a} to assess approximate noise-aware posteriors in \cref{formalism-subsection}.
    \item We propose Noise-Aware DP VI (NA-DPVI) for approximate noise-aware inference in \cref{sec:post-processing-model}. The method is based on post-processing the gradient trace from DPVI using a Bayesian linear model to capture the uncertainty from DP, and combining this with data-modelling uncertainty using the VI posterior approximation.
    \item We provide a theoretical analysis of the conditions under which our approach performs well, focusing on how the hyperparameters, including the learning-rate, affect the noise-aware posterior approximation in \Cref{supplemental-section-estimation-of-hessian-learning-rate-relation}.
    \item We employ an accurate evaluation method for approximate noise-aware posteriors by modifying the Test of Accuracy with Random Points (TARP) method in \citep{pmlr-v202-lemos23a}. We apply this algorithm to evaluate our method and compare it against existing baselines in \cref{sec:experiments}. Additionally, we demonstrate the real-world applicability of our method by applying it to a Bayesian logistic regression model on the UCI Adult dataset.
\end{compactenum}

\section{BACKGROUND}
\subsection{Bayesian Inference}
Assume we have a model $p\left(\bfD \mid \btheta\right)$, where $\btheta \in \Theta \subseteq \mathbb{R}^{n}$ denotes the unobserved model parameters; and $\bfD \in \mathcal{D}$ denotes the data. Given a prior $p(\pmb{\theta})$ for $\pmb{\theta}$, Bayes' Theorem states that the posterior can be written as:
\begin{equation}\label{bayes-Theorem}
    p(\pmb{\theta} \mid \bfD) = \frac{p\left(\bfD \mid \btheta\right)p(\btheta)}{\int_{\Theta} p\left(\bfD \mid \btheta\right)p(\btheta) \mathrm{d}\pmb{\theta}}.
\end{equation}
In many cases, the denominator in \cref{bayes-Theorem} is intractable. For such
cases, the true posterior can only be approximated with some other distribution
$\widetilde{p}(\btheta \mid \bfD)$ using methods such as Markov chain Monte
Carlo (MCMC) \citep{neal1993probabilistic} or Variational inference (VI)
\cite{JordanGJS99}.

\subsection{Validating Approximate Bayesian Inference}
After obtaining $\widetilde{p}$, we want to test how well $\widetilde{p}$ approximates $p$. 
In order to devise such a test, let us first assume that $\widetilde{p}$ has support anywhere $p$ has support. 
Next, we define a $(1-\alpha)$ credible region for $\tilde{p}$ as a mapping $\widetilde{\mathcal{R}}_{\alpha}: \mathcal{D} \rightarrow \mathcal{P}(\Theta)$, where $\mathcal{P}$ denotes a power set, such that for any $\bfD \in \mathcal{D}$
\begin{equation}\label{approximate-posterior-coverage}
    \int_{\Theta} \boldone_{\widetilde{\mathcal{R}}_{\alpha}\left(\bfD\right)}\left( \btheta \right) \tildep\left(\btheta \mid \bfD \right) \mathrm{d}\pmb{\theta} 
    = 1 - \alpha.
\end{equation}
Following \cite{pmlr-v202-lemos23a}, we define the Expected Coverage Probability (ECP) of $\widetilde{\mathcal{R}}_{\alpha}\left(\bfD\right)$ as
\begin{align}\label{ecp-definition-original}
        \text{ECP} \left[\widetilde{\mathcal{R}}_{\alpha}\left(\bfD\right)\right] &= \underset{\btheta, \bfD \sim p\left(\btheta, \bfD\right)}{\boldee}\left[\boldone_{\widetilde{\mathcal{R}}_{\alpha}\left(\bfD\right)}\left( \btheta \right) \right].
\end{align}
\cite{pmlr-v202-lemos23a} showed, that if $\text{ECP}
\left[\widetilde{\mathcal{R}}_{\alpha}\left(\bfD\right)\right] = 1-\alpha$ for
every mapping $\widetilde{\mathcal{R}}_{\alpha}$ that satisfied \cref{approximate-posterior-coverage} and every $\alpha \in (0, 1)$,
we have $\tilde{p}(\btheta \mid \bfD) = p(\btheta \mid \bfD)$. Unfortunately, testing
this is not practically feasible, as we would need to enumerate over every
possible credible region. In practice, we need to perform the test over a
finite selection of credible regions, and see if the expected coverage over this
set is close to $1-\alpha$.
However, the way the set of credible regions is constructed is critical as shown by \cite{pmlr-v202-lemos23a}. 
For example, we can obtain perfect $1-\alpha$ coverage by choosing $\widetilde{\mathcal{R}}_{\alpha}\left(\bfD\right)$ to be the ($1 - \alpha$) Highest Posterior Density (HPD) region and then setting $\tildep \left(\btheta \mid \bfD \right) = p(\btheta)$, i.e.\ using the prior as the posterior approximation. 
To solve this problem, \cite{pmlr-v202-lemos23a} introduced the concept of a \textbf{positionable credible region}.
\begin{definition}[\cite{pmlr-v202-lemos23a}] A positionable credible region is a mapping
\begin{equation}
    \tildeCoverage: \mathcal{D} \times \Theta \rightarrow \mathcal{P}\left(\Theta\right),
\end{equation}
for which the following conditions hold for all $\bthetaRef \in \Theta$:
\begin{compactenum}
    \item\label{family-of-credible-regions-positioned-at-ref-1} for any $\alpha \in \left(0, 1\right)$ and $\bfD \in \mathcal{D}$, the set $\tildeCoverage\left(\bfD, \bthetaRef\right)$ is a ($1 - \alpha$) credible region for $\tildep\left(\btheta \mid \bfD\right)$
    \item for all $\bfD \in \mathcal{D}$, \label{family-of-credible-regions-positioned-at-ref-2} $\lim_{\alpha \rightarrow 1}\tildeCoverage\left(\bfD, \bthetaRef\right) = \left\{ \bthetaRef \right\}$.
\end{compactenum}
\end{definition}

Intuitively, this means that we can position credible regions around any point in $\Theta$. In addition, we want to choose $\bthetaRef$ as a function of $\bfD$; that is, $\bthetaRef: \mathcal{D} \rightarrow \Theta$. 
Based on Theorem 3 in \cite{pmlr-v202-lemos23a}, if
\begin{equation} \text{ECP}\left[\tildeCoverage\left(\bfD, \bthetaRef\left(\bfD\right)\right)\right] = 1 - \alpha,
\end{equation}
for all $\alpha \in (0, 1)$ and any function $\bthetaRef: \mathcal{D} \rightarrow \Theta$, then $\tildep\left(\btheta \mid \bfD\right) = p\left(\btheta \mid \bfD\right)$ for all $\btheta \in \Theta$ and $\bfD \in \mathcal{D}$. This is an important property of the ECP with the positionable-credible regions, as it establishes the equality of $\widetilde{p}$ and $p$ if and only if $\widetilde{p}$ has perfect coverages. For the rest of the paper we will abbreviate $\tildeCoverage\left(\bfD, \bthetaRef\left(\bfD\right)\right)$ as $\tildeCoverage\left(\bfD\right)$.

The resulting coverage test by \cite{pmlr-v202-lemos23a} called TARP can
be implemented with the following steps.
\begin{algorithm}
    \begin{algorithmic}[1]
        \State $S_\alpha \gets 0$
        \For {$k \leq K$}
            \State Sample $\btheta, \bfD$ from the model $p(\btheta, \bfD)$;
            \State Compute the approximate posterior $\tilde{p}(\btheta \mid \bfD)$;
            \State Sample reference point $\btheta_{\text{ref}}^{(k)}$
            \State Set $I_k = \boldone_{\widetilde{\mathcal{C}}_\alpha(\bfD)}(\btheta)$;
            \State $S_\alpha = S_\alpha + I_k$
        \EndFor
        \State \textbf{return} $\frac{1}{K} S_{\alpha}$
    \end{algorithmic}
    \caption{TARP method. Modified from \cite{pmlr-v202-lemos23a}.\label{alg:non-private-ecp}}
\end{algorithm}

\subsection{Differential Privacy (DP) and DP Stochastic Gradient Descent}
DP \citep{DworkMNS06} is the standard definition for privacy in modern computer science. DP is defined over so called neighbouring data sets, i.e.~data sets that differ in only single element. The formal definition is given as:
\begin{definition}{\cite{DworkMNS06}}
    A randomized algorithm $\mathcal{A}$ is said to be $(\epsilon, \delta)$-DP if for all neighbouring data sets $D, D'$ and all sets $S \subset \mathrm{Range}(\mathcal{A})$
    \begin{align}
        \Pr(\mathcal{A}(D) \in S) 
            \leq e^\epsilon \Pr(\mathcal{A}(D') \in S) + \delta.
    \end{align}
\end{definition}

DP has a number of appealing theoretical properties, including compositionality, whereby repeated accesses to the data weaken the guarantees in a predictable manner, and post-processing immunity, whereby the guarantees can never be weakened by post-processing.

DP Stochastic Gradient Descent (DP-SGD) \citep{pmlr-v22-rajkumar12, 6736861, Abadi_2016} modifies the traditional SGD algorithm to satisfy DP.
In order to apply DP-SGD, we need to assume that the loss function, parametrized
by $\bphi$, can be represented as a sum over individuals $\pmb{x}_i \in \bfD$:
\begin{equation}\label{dp-sgd-loss-definition}
    \mathcal{L}\left(\bphi ; \bfD \right) = \sum_{i= 1}^{N} \ell(\bphi; \pmb{x}_i).
\end{equation}
DP-SGD makes three modifications to the SGD algorithm: 
\begin{inparaenum}[1)]
    \item gradients are computed for each $\pmb{x}_i$ in the batch
    \item these \emph{per-example} gradients are clipped to a bounded norm $C$ and
    \item Gaussian noise is added to the summed per-example gradients.
\end{inparaenum}
The DP-SGD update rule can be written as
\begin{align}\label{dp-sgd-definition}
    \begin{split}
        \pmb{g}_{t + 1} &= \sum_{i \in \mathcal{B}_{t + 1}} \clip{\nabla_{\pmb{\phi}}\ell(\pmb{\phi}_t; \pmb{x}_i), C}, \\
        \pmb{\phi}_{t + 1} &= \pmb{\phi}_{t} - \lambda_t \left[ \pmb{g}_{t + 1}  + \sigma_{\text{DP}} C \pmb{\eta}_{t + 1} \right],
    \end{split}
\end{align}
where $\lambda_t$ is the learning rate, $\mathcal{B}_t$ is a Poisson subsampled minibatch and $\pmb{\eta}_{t+1} \sim \mathcal{N}(\mathbf{0}, \mathbf{I}_d)$.
The Gaussian noise-level $\sigma_{\text{DP}}$ is chosen so that the $(\epsilon, \delta)$-DP guarantees hold for $T$ iterations of the DP-SGD algorithm. The general privacy proofs for DP-SGD allow releasing all of the intermediate steps and the noisy gradients of the algorithm under the same privacy guarantee.

\subsection{DP Variational Inference}
Variational inference \citep{JordanGJS99} is a method used to approximate a posterior $p\left(\btheta \mid \bfD \right)$ using another distribution called the variational distribution $q_{\text{VI}}\left(\btheta; \bphi\right)$ parameterized by a vector $\bphi \in \Phi \subseteq \mathbb{R}^{d}$. The idea is to find $\pmb{\phi}$ that maximizes the Evidence Lower Bound (ELBO): 
\begin{align}\label{elbo-definition}
    \mathcal{L}\left(\bphi; \bfD\right) 
        &= \boldee_{\qVI(\btheta; \bphi)} \left[ \log \frac{p(\btheta, \bfD)}{q_{\text{VI}}(\btheta; \bphi)} \right].
\end{align}
Furthermore, when the observations are i.i.d.~we can decompose the ELBO as
\begin{align}
    \begin{split}
    \mathcal{L}\left(\bphi; \bfD\right) 
        &= \mathbb{E}_{\qVI(\btheta; \bphi)} \left[ 
                \log \frac{p(\btheta)}{\qVI(\btheta; \bphi)}
            \right] \\
        &\phantom{aaa} + \sum_{i=1}^N \mathbb{E}_{\qVI(\btheta; \bphi)} \left[ 
                \log p(\pmb{x}_i \mid \btheta)
            \right].
    \end{split}
\end{align}
The expectations in ELBO are typically intractable. To solve this,
\cite{KingmaW13} proposed a solution called Stochastic Gradient Variational
Bayes (SGVB), which approximates the expectations with Monte-Carlo estimates and
parametrizes $\qVI$ s.t.~the Monte-Carlo approximator of the ELBO
becomes a differentiable function that can be optimized with gradient based
methods. For more details on the Monte-Carlo estimation and the differentiable
parametrization of $\qVI$ see \cref{supplemental-section-experimental-additional-details}.

\cite{dpviinf} proposed an DP variant of the SGVB method called DPVI, by
replacing the gradient optimizer with DP-SGD. Later \cite{dpvim} improved the 
DPVI by using certain structural knowledge of the gradients in order to 
improve the convergence.

\section{NOISE-AWARE INFERENCE}\label{sec:noise-aware-inference}
\subsection{Formalism}\label{formalism-subsection}
The distributions in this paper are assumed to be continuous unless explicitly stated otherwise. Optimizing the ELBO using DP-SGD introduces uncertainty into the final VI approximation. Our goal is to integrate the noise from the DP mechanism into the posterior, this motivates the following definition.
\begin{definition}\label{definition-noise-aware-posterior}
  Given any prior distribution $p(\btheta)$ for $\btheta$, let $\mathcal{M}\left( \bfD; \epsilon, \delta\right)$ be randomized algorithm that outputs a random vector $\bxi \in \Xi \subseteq \mathbb{R}^{w}$ according to some distribution $p\left(\bxi \mid \bfD, \btheta\right)$ and provides $(\epsilon, \delta)$ differential privacy, then the following posterior distribution given according to Bayes' Theorem,
  \begin{equation}\label{definition-noise-aware-posterior-bayes}
      p\left(\btheta \mid \bxi \right) = \frac{\int_{\calD} p\left( \bxi \mid \bfD, \btheta \right) p\left(\bfD \mid \btheta \right)p(\btheta) \mathrm{d}\mathbf{D}}{\int_{\Theta} \int_{\calD} p\left( \bxi \mid \bfD, \btheta \right) p\left(\bfD \mid \btheta \right)p(\btheta) \mathrm{d}\mathbf{D} \mathrm{d}\pmb{\theta}},
  \end{equation}
  is called the (true) \textbf{noise-aware posterior}.
  The following joint distribution which is found in \cref{definition-noise-aware-posterior-bayes} is very important and will be used repeatedly throughout this paper,
    \begin{equation}\label{noise-aware-joint}
        p\left(\btheta, \bfD, \bxi\right) =  p\left( \bxi \mid \bfD, \btheta \right) p\left(\bfD \mid \btheta \right)p(\btheta).
    \end{equation}
\end{definition}

Similar to $\bfD$, the vector $\bxi$ can be thought of as either one point or a list of points concatenated into one large vector. For instance, DP-SGD provides the iterates $(\bphi_0, \ldots, \bphi_T)$ as a list of vectors. \cref{definition-noise-aware-posterior-bayes} is obtained by applying Bayes' theorem using the joint distribution in \cref{noise-aware-joint} with additionally marginalizing out the data $\mathbf{D}$, as it cannot be observed in the DP setting.

Computing \cref{definition-noise-aware-posterior-bayes} analytically is often intractable, and instead it is approximated by another distribution. Given any approximation of \cref{definition-noise-aware-posterior-bayes}, say $\tildep\left(\btheta \mid \bxi \right)$, we want to evaluate this approximation using the ECP. However, compared to the non-private Bayesian inference, we now have three random vectors $\pmb{\theta}$, $\bfD$, and $\bxi$ in the data generating process. 
Thus, we propose a slight modification to the $\text{ECP}$ as follows. Note that $p\left(\btheta , \bxi\right) = p\left(\bxi \mid \btheta \right) p\left(\btheta\right)$, so we can define the ECP equivalently using the joint distribution in \cref{noise-aware-joint} as,
\begin{align}\label{ecp-definition-alternate}
    \begin{split}
        \text{ECP} \left[\tildeCoverage\left(\bxi\right)\right] &= \underset{\btheta, \bfD, \bxi \sim p\left(\btheta, \bfD, \bxi\right)}{\boldee}\left[\boldone_{\tildeCoverage\left(\bxi\right)}\left( \btheta \right) \right]. 
    \end{split}
\end{align}
The reason why \cref{ecp-definition-alternate} and \cref{ecp-definition-original} are equivalent is that the indicator function $\boldone_{\tildeCoverage\left(\bxi\right)}\left( \btheta \right)$ does not depend on $\bfD$, so by applying Fubini's Theorem \citep{folland2013real}, the expectation in \cref{ecp-definition-alternate} becomes
\begin{align}
    \underset{\btheta, \bfD, \bxi \sim p\left(\btheta, \bfD, \bxi\right)}{\boldee}\left[\boldone_{\tildeCoverage\left(\bxi\right)}\left( \btheta \right) \right]
    = \underset{\btheta, \bxi \sim p\left(\btheta, \bxi\right)}{\boldee}  \left[\boldone_{\tildeCoverage\left(\bxi\right)}(\btheta) \right].
\end{align}
Similarly, the ECP in \cref{ecp-definition-alternate} with the positionable-credible regions establishes the equality of $\widetilde{p}$ and $p$ if and only if $\widetilde{p}$ has perfect coverages. 

To evaluate the approximate noise-aware posterior based on \cref{ecp-definition-alternate}, the coverage test in \cref{alg:non-private-ecp} needs to be modified to include the data generation process for $\bxi$. We first draw samples from the joint distribution \cref{noise-aware-joint}, where the samples are drawn in the following order,
\begin{equation}
\btheta_i \sim p\left(\btheta\right) \boldsymbol{\rightarrow} \bfD_i \sim p\left(\bfD \mid \btheta_{i} \right) \boldsymbol{\rightarrow}  \bxi_i \sim p\left(\bxi_{i} \mid \bfD_{i}, \btheta_{i} \right).
\end{equation}
Second, we replace $I_k$ in~\cref{alg:non-private-ecp} with $I_k = \boldone_{\tildeCoverage\left(\bxi_i\right)}\left(\btheta_i\right)$.

\subsection{Approximate Inference with DPVI}
We build our noise-aware approximate Bayesian inference method on top of DPVI.
Recall that DPVI is based on optimizing the ELBO with DP-SGD. The final posterior
approximation returned by DPVI is the $q(\btheta; \pmb{\phi}_T)$, where $\pmb{\phi}_T$
denote the last parameters returned by the DP-SGD optimization. The DPVI treats 
these parameters as the true optima, and hence disregards all the stochasticity
that was introduces by the Gaussian perturbation. Therefore, the DPVI as such is 
completely unaware of the noise.

However, due to the privacy accounting of DP-SGD, we do not need to limit our
considerations to the last iterate. In fact, we can use the full parameter and
noisy gradient traces, respectively $\mathcal{T} = \{ \bphi_{t} \}_{t=1}^{T}$
and  $\widetilde{\mathcal{G}} = \{ \tildebg_{t} \}_{t=1}^T$, for arbitrary
post-processing. 
In the next Section, we will introduce our post-processing model, which allows
us to model the DP-induced noise, and make the approximate inference
noise-aware.

\subsection{Post-processing Model}
Assume for now that our gradients norms are bounded by $C$. Hence
$\text{clip}(\nabla_{\pmb{\phi}} \ell (\pmb{\phi}_t; x_i), C) =
\nabla_{\pmb{\phi}} \ell (\pmb{\phi}_t; x_i)$. From the DP-SGD update 
equations, we have that the noisy gradient at iteration
$t$ is
\begin{align}
    \label{eq:dp-sgd-grad}
    \tildebg_{t + 1} 
        = \sum_{i \in \mathcal{B}_{t + 1}} \nabla_{\pmb{\phi}} \ell (\pmb{\phi}_t; x_i) 
            + \sigma_{\text{DP}} C \boldsymbol{\eta},
\end{align}
\label{sec:post-processing-model}
\begin{figure}[ht]
  \centering
  \includegraphics{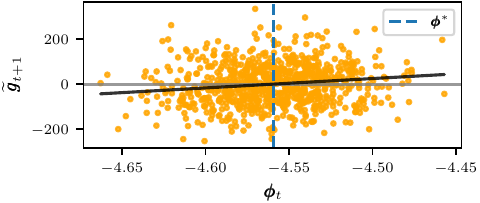}
  \caption{An example of the linear model of the perturbed gradients $\kappa\mathbf{A}(\bphi_t - \bphis)$ (black line) based on \ref{expfam-1}.}
    \label{fig:gradients-fit}
\end{figure}
where $\pmb{\eta} \sim \mathcal{N}\left(0, \mathbf{I}_{d}\right)$ with
$\mathbf{I}_{d}$ denoting a $d$-dimensional identity matrix. Next, we assume that the subsampled gradients could be approximated with a Gaussian distribution through the central limit theorem. This is possible since, in DP-SGD, a large sampling rate or batch size is usually used (e.g., $\kappa = 0.1$). Given the full-data gradient at iteration $t$, $\nabla \mathcal{L}(\pmb{\phi}_t; \mathbf{D})$, we have
\begin{align}
    \label{eq:subsampled-true-grad}
    \sum_{i \in \mathcal{B}_{t + 1}} \nabla_{\pmb{\phi}} \ell (\pmb{\phi}_t; x_i)
        \sim \mathcal{N}\left( 
            \kappa \nabla \mathcal{L}(\pmb{\phi}_t; \mathbf{D}), 
            \Sigma_{\text{sub}}(\bphi_t)
            \right).
\end{align}
Combining \cref{eq:dp-sgd-grad,eq:subsampled-true-grad} yields
\begin{align}
    \tildebg_{t + 1} 
        \sim \mathcal{N}(\kappa \nabla \mathcal{L}(\pmb{\phi}_t; \mathbf{D}), \sigma_{\text{DP}}^2 C^2 \mathbf{I}_d + \Sigma_{\text{sub}}(\bphi_t)).
\end{align}
Since the true gradient, as a data-dependent quantity is clearly
unknown, we need to approximate it. We will approximate the true gradient as a
linear function parameterized with a matrix $\mathbf{A}$ and a vector $\bphis$ through the second-order Taylor approximation of $\mathcal{L}$:
\begin{align}
    \nabla \mathcal{L}(\pmb{\phi}_t; \mathbf{D}) \approx \mathbf{A} (\pmb{\phi}_t - \bphis).
\end{align}
This parametrization is convenient, as it allows us to interpret the $\bphis$ as
the \emph{true optimum} for the VI problem. If we further assume that $\Sigma_{\text{sub}}$ is approximately constant around $\bphis$, then denoting the matrix $\Sigma_{\text{total}} = \sigma_{\text{DP}}^2 C^2 \mathbf{I}_d +
\Sigma_{\text{sub}}$, we can write
\begin{align}\label{gradient-model-trace-approximate}
    \begin{split}
        &\tildebg_{t + 1} \mid \bphi_{t}, \mathbf{A}, \bphis, \Sigma_{\text{sub}} \sim \mathcal{N}\left(\kappa \mathbf{A} \left(\bphi_t - \bphis\right), \Sigma_{\text{total}}\right).
    \end{split}
\end{align}
\Cref{fig:gradients-fit} shows an example of this model based on the exponential distribution \ref{expfam-1}, with the noisy gradients $\tildebg_t$ and a MAP estimate for $\bphis$ together with the fitted gradient. We now have a model for the noisy gradients, and can perform Bayesian inference on
the unknown variables $\mathbf{A}, \bphis$ and $\Sigma_{\text{sub}}$.
Note that the parameters $\mathbf{A}$ and $\bphis$ encapsulate all the information about the latent 
sensitive data $\bfD$. This avoids the costly marginalization of individual samples 
over the latent data, and instead our models scales for arbitrary number of data samples.

After obtaining the posterior for $\mathbf{A}, \bphis$ and $\Sigma_{\text{sub}}$, we can 
form the final noise-aware posterior approximation. 
We start by marginalizing out $\mathbf{A}$ and $\Sigma_{\text{sub}}$ from the post-processing model,
\begin{equation}\label{post-processing-model-marginalized}
    p\left(\bphis \mid \mathcal{T}\right) = \int p\left(\mathbf{A}, \bphis, \Sigma_{\text{sub}} \mid \mathcal{T}\right) \mathrm{d}\mathbf{A} \mathrm{d}\Sigma_{\text{sub}}.
\end{equation}
Then by observing that the joint distribution of $\btheta, \bphis$ conditioned on $\mathcal{T}$ is given by,
\begin{align}\label{post-processing-joint-model}
    \begin{split}
        p(\btheta, \bphis \mid \mathcal{T}) &= p(\btheta \mid \bphis, \mathcal{T}) p(\bphis \mid \mathcal{T}) \\
        &= q_{\text{VI}}(\btheta; \bphis) p(\bphis \mid \mathcal{T}),
    \end{split}
\end{align}
we obtain the final noise-aware approximate posterior $\widetilde{p}\left(\btheta \mid \mathcal{T}\right)$ by marginalizing out $\bphis$ in \cref{post-processing-joint-model},
\begin{equation}\label{approximate-noise-aware-posterior}
    \widetilde{p}(\btheta \mid \mathcal{T}) = \int q_{\text{VI}}(\btheta; \bphis) p(\bphis \mid \mathcal{T}) \mathrm{d}\bphis.
\end{equation}
To provide theoretical justification for our gradient-based model, we first need to state the following assumptions.
\begin{assumption}\label{Assumption-existence-of-optimum-for-objective}
    There exists a stationary point $\bphis$ for $\mathcal{L}$ such that $\nabla_{\bphi} \mathcal{L}\left(\bphis; \bfD\right) = \pmb{0}$.
\end{assumption}
\cref{Assumption-existence-of-optimum-for-objective} is a reasonable assumption for the VI problem to be solvable via DP-SGD.
\begin{assumption}\label{Assumption-taylor-approximation-of-loss}
    We assume the loss function can be well-approximated by the second-order Taylor expansion on a neighborhood around $\bphis$ that includes the stationary part of the trace.
\end{assumption}
Motivation for the second-order Taylor approximation can be found in \cref{supplemental-section-taylor-approximation} which implies that $\mathbf{A}$ is the Hessian. We can rigorously formalize \cref{Assumption-taylor-approximation-of-loss} through the Fr\'{e}chet derivative of $\nabla_{\bphi} \mathcal{L}\left(\bphi; \bfD\right)$ at $\bphis$. For some tolerance $e_{\text{tay}} > 0$, that is problem-dependent, there exists $1 < T^{*} < T$, and an open ball $\mathrm{B}_{r^{*}}(\pmb{\phi}^{*})$ around $\bphis$ with radius $r^{*}$ that contains $\bphi_{t}$ for all $t \geq T^{*}$, such that, for all $\bphi \in \mathrm{B}_{r^{*}}(\pmb{\phi}^{*})$
\begin{equation}\label{frechet-derivative-taylor}
    \frac{\lVert \nabla_{\bphi} \mathcal{L}\left(\bphi; \bfD\right) - \nabla_{\bphi}^2 \mathcal{L}\left(\bphis;\bfD\right)(\bphi - \bphis) \rVert}{\lVert \bphi - \bphis \rVert} < e_{\text{tay}}.
\end{equation}
\begin{assumption}\label{Assumption-per-example-loss-is-lipschitz}
    We assume that the clipping threshold $C$ is high enough such that $C \geq \lVert \nabla_{\bphi} \ell(\bphi_t, \pmb{x}_i)  \rVert$ for all $1 \leq t \leq T$ and for all $1 \leq i \leq N$.
\end{assumption}
Assumption \ref{Assumption-per-example-loss-is-lipschitz} is required to make the clipping operation redundant, allowing us to write the noisy gradient models as in \cref{gradient-model-trace-approximate}.
\begin{assumption}\label{Assumption-bounded-sgd-covariance}
    We assume that for all $1 \leq t \leq T$ the subsampling error is bounded, i.e. there exists $e_{\text{sub}} > 0$ such that
    \begin{equation}
        \underset{\text{Ber}\left(\kappa\right)}{\boldee}\left[ \left\lVert \pmb{g}_{t + 1} - \kappa \nabla_{\pmb{\phi}} \mathcal{L}\left(\pmb{\phi}_t; \mathbf{D}\right) \right\rVert^2 \right] \leq e^2_{\text{sub}}.
    \end{equation}
\end{assumption}
We verify this assumption experimentally through the application of our method. We can approximate $\Sigma_{\text{sub}}$ in principle; however, most of our experiments show that this is not necessary due to the relatively large magnitude of the DP noise, so we can set $\Sigma_{\text{sub}} = \pmb{0}$.
Now we will introduce the following theorem which theoretically justifies our post-processing model.
\begin{theorem}\label{gradient-based-model-Theorem}
    Under Assumptions \ref{Assumption-existence-of-optimum-for-objective}, \ref{Assumption-taylor-approximation-of-loss}, \ref{Assumption-per-example-loss-is-lipschitz}, and \ref{Assumption-bounded-sgd-covariance}, there exists a matrix $\mathbf{A}$ such that for all $t \geq T^{*}$,
    \begin{equation}\label{gradient-based-model-error}
        \underset{\text{Ber}\left(\kappa\right)}{\boldee} \left[ \left\lVert \pmb{g}_{t + 1} - \kappa \mathbf{A}\left(\pmb{\phi}_t - \bphis \right) \right\rVert^2 \right] < e_{\text{approx}}^2,
    \end{equation}
    where $e_{\text{approx}}^2 = 2e^2_{\text{sub}} + 2(\kappa \times e_{\text{tay}} \times r^{*})^2$.
\end{theorem}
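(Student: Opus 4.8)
The plan is to set $\mathbf{A} = \nabla_{\bphi}^2 \mathcal{L}\left(\bphis; \bfD\right)$, the Hessian at the stationary point (the candidate suggested by the Taylor motivation in \cref{supplemental-section-taylor-approximation}), and to decompose the approximation error into a purely stochastic subsampling part and a purely deterministic linearization part. First I would add and subtract $\kappa \nabla_{\bphi}\mathcal{L}(\bphi_t; \bfD)$ inside the norm, writing $\pmb{g}_{t+1} - \kappa\mathbf{A}(\bphi_t - \bphis) = \pmb{u}_t + \pmb{v}_t$, where $\pmb{u}_t = \pmb{g}_{t+1} - \kappa\nabla_{\bphi}\mathcal{L}(\bphi_t; \bfD)$ is the Poisson-subsampling fluctuation and $\pmb{v}_t = \kappa\big[\nabla_{\bphi}\mathcal{L}(\bphi_t; \bfD) - \mathbf{A}(\bphi_t - \bphis)\big]$ is the Taylor residual. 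Here \cref{Assumption-per-example-loss-is-lipschitz} is what lets me write $\pmb{g}_{t+1}$ as the unclipped subsampled sum $\sum_{i\in\mathcal{B}_t}\nabla_{\bphi}\ell(\bphi_t; \pmb{x}_i)$, so the clipping never activates.

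The key step is that the two parts are orthogonal in expectation, giving a clean bias--variance split with no cross term. Crucially, Poisson subsampling at rate $\kappa$ makes $\pmb{u}_t$ mean-zero: since each index enters $\mathcal{B}_t$ independently with probability $\kappa$, we have $\boldee_{\text{Ber}(\kappa)}[\pmb{g}_{t+1}] = \kappa\sum_{i=1}^N \nabla_{\bphi}\ell(\bphi_t; \pmb{x}_i) = \kappa\nabla_{\bphi}\mathcal{L}(\bphi_t; \bfD)$, hence $\boldee[\pmb{u}_t] = \pmb{0}$. Because $\pmb{v}_t$ is deterministic given $\bphi_t$, the cross term $\boldee[\langle \pmb{u}_t, \pmb{v}_t\rangle] = \langle \boldee[\pmb{u}_t], \pmb{v}_t\rangle = 0$, so
\begin{equation*}
    \boldee_{\text{Ber}(\kappa)}\big[\lVert \pmb{u}_t + \pmb{v}_t\rVert^2\big] = \boldee_{\text{Ber}(\kappa)}\big[\lVert \pmb{u}_t\rVert^2\big] + \lVert \pmb{v}_t\rVert^2.
\end{equation*}

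Next I would bound the two summands separately. The first is exactly the quantity controlled by \cref{Assumption-bounded-sgd-covariance}, so $\boldee_{\text{Ber}(\kappa)}[\lVert \pmb{u}_t\rVert^2] \leq e_{\text{sub}}^2$. For the second, restricting to the stationary part of the trace $t \geq T^{*}$ guarantees $\bphi_t \in \mathrm{B}_{r^{*}}(\bphis)$, so the Fr\'{e}chet bound \eqref{frechet-derivative-taylor} with my choice $\mathbf{A} = \nabla_{\bphi}^2\mathcal{L}(\bphis; \bfD)$ gives $\lVert \nabla_{\bphi}\mathcal{L}(\bphi_t; \bfD) - \mathbf{A}(\bphi_t - \bphis)\rVert < e_{\text{tay}}\lVert \bphi_t - \bphis\rVert < e_{\text{tay}}\, r^{*}$, whence $\lVert \pmb{v}_t\rVert^2 < (\kappa\, e_{\text{tay}}\, r^{*})^2$. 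Summing the two bounds yields $\boldee_{\text{Ber}(\kappa)}[\lVert \pmb{g}_{t+1} - \kappa\mathbf{A}(\bphi_t - \bphis)\rVert^2] < e_{\text{sub}}^2 + (\kappa\, e_{\text{tay}}\, r^{*})^2 = e_{\text{approx}}^2$, with strictness inherited from the strict Taylor bound.

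The main obstacle is less the algebra than pinning down the probabilistic structure that makes the cross term vanish: everything hinges on the unbiasedness $\boldee[\pmb{g}_{t+1}] = \kappa\nabla_{\bphi}\mathcal{L}(\bphi_t; \bfD)$, which is precisely why \cref{Assumption-per-example-loss-is-lipschitz} is needed (so the clip is inactive and does not bias the subsampled sum) and why the expectation must be taken over the Bernoulli inclusion variables with $\bphi_t$ held fixed. I would also be careful to state the conclusion only for iterates in the stationary regime $t \geq T^{*}$, since that is the range on which \cref{Assumption-taylor-approximation-of-loss} supplies the ball $\mathrm{B}_{r^{*}}(\bphis)$ containing $\bphi_t$, and to note that \cref{Assumption-existence-of-optimum-for-objective} is what makes $\mathbf{A}$ the first-order term of $\nabla_{\bphi}\mathcal{L}$ at $\bphis$ with no constant offset.
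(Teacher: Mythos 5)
Your proposal is correct, and it uses the same skeleton as the paper's proof: choose $\mathbf{A} = \nabla_{\bphi}^2 \mathcal{L}(\bphis; \bfD)$, add and subtract $\kappa \nabla_{\bphi}\mathcal{L}(\bphi_t; \bfD)$, bound the subsampling term by \cref{Assumption-bounded-sgd-covariance} and the Taylor residual by \cref{frechet-derivative-taylor} together with $\lVert \bphi_t - \bphis \rVert < r^{*}$ for $t \geq T^{*}$. The one genuine difference is how the cross term is handled, and here your version is actually tighter than the paper's. The paper asserts the \emph{pointwise} inequality
\begin{equation*}
    \left\lVert \pmb{u}_t + \pmb{v}_t \right\rVert^2 \leq \left\lVert \pmb{u}_t \right\rVert^2 + \left\lVert \pmb{v}_t \right\rVert^2,
\end{equation*}
which is false in general, since it drops the cross term $2\langle \pmb{u}_t, \pmb{v}_t\rangle$ (it would only hold pointwise if that inner product were nonpositive, or with an extra factor of $2$ on both terms). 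Your argument supplies exactly the missing justification: the inequality need only hold \emph{in expectation}, and there the cross term vanishes because Poisson subsampling with inactive clipping (\cref{Assumption-per-example-loss-is-lipschitz}) makes $\pmb{g}_{t+1}$ unbiased for $\kappa \nabla_{\bphi}\mathcal{L}(\bphi_t; \bfD)$, while $\pmb{v}_t$ is deterministic given $\bphi_t$. So your orthogonality step is not a cosmetic variation but the repair that makes the paper's chain of inequalities valid; you are also right to flag that the conclusion should be read as holding for the stationary iterates $t \geq T^{*}$, a restriction the paper's proof uses but its theorem statement leaves implicit.
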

The proof of \cref{gradient-based-model-Theorem} is found in \cref{supplemental-section-proof-of-theorem-1}. 
If some of these assumptions do not hold in practice, such as \cref{Assumption-per-example-loss-is-lipschitz}, then it is sufficient that the approximation given by \cref{gradient-model-trace-approximate} holds for our method to work. These assumptions mainly aid the theoretical analysis and the derivation of our method. We verify that the approximation in \cref{gradient-model-trace-approximate} is reasonable in the following section experimentally, by applying the method to various models. Furthermore, we find that the method is sensitive to the learning rate of DP-SGD and the priors. Thus, we analyze this further in \Cref{supplemental-section-estimation-of-hessian-learning-rate-relation}. We obtain a heuristic for the learning rate based on that analysis, which works well across our experiments.

\begin{figure*}[ht]
  \centering
  \includegraphics[scale=1.0]{./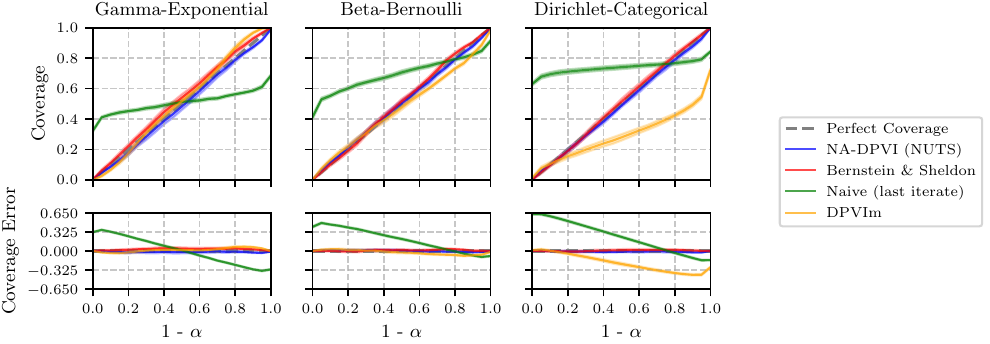}
  \caption{The first row in the figure shows the TARP coverages for the exponential families experiment for NA-DPVI (NUTS), Bernstein \& Sheldon's method \citep{BernsteinS18}, last iterate DPVI, and DPVIm \citep{dpvim}. 
  The second row shows the error for the coverages ($C(\alpha) - (1 - \alpha)$). 
  The solid lines show the average performance over $20$ independent TARP repetitions and the error bars show the corresponding std.
  The parameters for NA-DPVI are $\delta = 10^{-5}$, $N = 5000$, $\kappa = 0.1$ and $T = 10^4$. 
  } 
  \label{fig:expfam-coverages}
\end{figure*}
\label{sec:experiments}
\begin{table*}[ht]
    \centering
    \caption{The RMSE errors corresponding to the exponential family coverages in \cref{fig:expfam-coverages}. Average RMSE $\pm$ std. $\delta = 10^{-5}$ in all experiments.}
    \label{table:expfam-errors}
    \begin{tabular}{lccc}
        \hline
        \textbf{Method} & \textbf{Gamma-Exponential} & \textbf{Beta-Bernoulli} & \textbf{Dirichlet-Categorical}\\
        \hline
        NA-DPVI (NUTS) & \textbf{0.023 $\pm$ 0.008} & \textbf{0.016 $\pm$ 0.006} & 0.020 $\pm$ 0.004\\
        Bernstein \& Sheldon & 0.034 $\pm$ 0.011 & 0.018 $\pm$ 0.006 & \textbf{0.017 $\pm$ 0.007}\\
        Naive (last iterate) & 0.232 $\pm$ 0.003 & 0.273 $\pm$ 0.007 & 0.355 $\pm$ 0.009\\
        DPVIm & 0.038 $\pm$ 0.005 & 0.044 $\pm$ 0.004 & 0.251 $\pm$ 0.011\\
        \hline
    \end{tabular}
\end{table*}

\section{EXPERIMENTS}

\begin{figure*}[ht]
  \centering
  \includegraphics[scale=1.0]{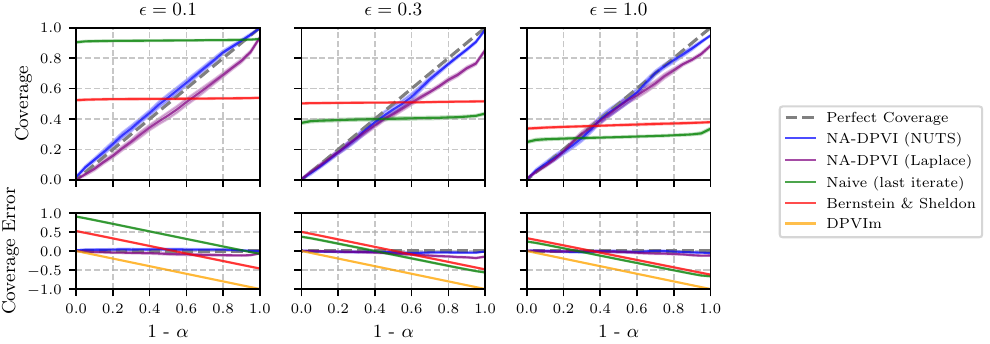}
  \caption{
  The first row in the figure shows the TARP coverages for the 10D Bayesian linear regression experiment for NA-DPVI (NUTS, Laplace), last iterate DPVI, DPVIm \citep{dpvim}, and Gibbs-SS-Noisy \citep{BernsteinS19}. 
  The second row shows the error for the coverages ($C(\alpha) - (1 - \alpha)$). 
  The solid lines show the average performance over $20$ independent TARP repetitions and the error bars show the corresponding std.
  The parameters for NA-DPVI are $\delta = 10^{-5}$, $N = 5000$, $\kappa = 0.1$ and $T = 10^4$. 
  } 
  \label{fig:linear-regression-10d-coverages}
\end{figure*}

\subsection{Setup}\label{sec:experiments-setup}
In our experiments, we use a diagonal Gaussian distribution as the variational distribution, i.e. $q_\text{VI}(\btheta; \bphi) = \mathcal{N}(\btheta; \pmb{\mu}, \mathrm{diag}(\pmb{s}))$, where $\pmb{\mu}$ are the means and $\pmb{s}$ are the variances.
Instead of directly optimizing the variances $\pmb{s} \in \mathbb{R}_{+}^n$, 
we parametrize the variational distribution with $\pmb{u} \in \mathbb{R}^n$ 
such that $\pmb{s}$ is an element-wise softplus transformation of $\pmb{u}$ as
$s_j = \log (1 + e^{u_j})$.
Now the variational distribution is parameterized with $\bphi = (\pmb{\mu},
\pmb{u})$ leading to $d=2n$ variational parameters to optimize in total.

\cite{dpvim} demonstrated that the gradients w.r.t.~$\pmb{u}$ are usually a lot smaller in magnitude than the gradients w.r.t.~$\pmb{\mu}$.
Since DP-SGD adds identically distributed noise to each dimension of the
gradient, the gradients w.r.t.~$\pmb{u}$ will be affected disproportionally by
the noise. To avoid this, we scale up gradients w.r.t.~$\pmb{u}$ before clipping
and noise addition, and revert this as a post-processing before we take the
update step.
For more details about this \emph{preconditioning} see \cref{supplemental-section-experimental-additional-details}.

We infer the post-processing model in \cref{post-processing-joint-model} through two methods, the No-U-Turn Sampler (NUTS) \citep{JMLR:v15:hoffman14a} and using Laplace's approximation which we obtain using the Adam optimizer \citep{DBLP:journals/corr/KingmaB14}. More details about these methods can be found in \cref{supplemental-section-post-processing-model-approximations}. Additionally we compare our method against the last iterate DPVI.

We approximate $\mathbf{A}$ with a diagonal matrix, that is $\mathbf{A} = \mathrm{diag}(a_1, \ldots, a_d)$. Further, in this section we interpret $\mathcal{L}$ as the optimization loss minimized by DP-SGD, i.e. the negative ELBO. Hence, around a local optimum, the Hessian is positive-definite, and we constrain $\mathbf{A}$ to be element-wise positive by parameterizing the post-processing model with $\pmb{v} \in \mathbb{R}^d$ and mapping the $v_i$ to $a_i$ with the softplus transformation, i.e.~$a_i = \text{softplus}(v_i)$. The algorithm blocks for NA-DPVI are provided in \cref{supplemental-section-algorithm-blocks}.

We observed empirically that non-informative priors worked poorly for our
post-processing model which motivated us to do further analysis (see \Cref{supplemental-section-estimation-of-hessian-learning-rate-relation}). Instead, we chose the priors for $\pmb{v}$ and
$\bphis$ based on $\mathcal{T}$ and $\widetilde{\mathcal{G}}$ which we discuss in \cref{supplemental-section-setting-the-priors}.

For evaluating the coverages, we apply a slightly modified version of the TARP algorithm (\cref{alg:non-private-ecp}) according to \cref{formalism-subsection} which we elaborate on further in \cref{supplemental-section-tarp-evaluation-method}. We repeat TARP $20$ times in each experiment with $K=500$ to obtain error estimates for the TARP. We estimate the error as the difference between the coverages at each credible level and the perfect coverages ($C(\alpha) - (1 - \alpha)$). We summarize these coverage errors with a single number using the RMSE (Root Mean Square Error) metric for $N_{\alpha}$ values of $\alpha$,
\begin{equation}\label{rmse-definition}
    \text{RMSE} = \sqrt{\frac{1}{N_{\alpha}} \sum_{i = 1}^{N_{\alpha}} \left(C\left(\alpha_i\right) - \left(1 - \alpha_i\right) \right)^2}.
\end{equation}
Across all the coverage experiments we use $\delta = 10^{-5}$, $T = 10^4$, $N = 5000$, $\kappa = 0.1$, and we use the NUTS method with $1000$ warmup iterations and then run it for $4000$ iterations.

\subsection{Exponential Families}\label{section-expfam}
We compare our approach to the method of \citet{BernsteinS18} which only works for exponential families, and with DPVIm \citep{dpvim}, to show that our method works well for similar models. We  implement the following conjugate models,
\begin{enumerate}[label=M\arabic*,nosep]
    \item\label{expfam-1}: $\pmb{\theta} \sim \text{Gamma}\left(\alpha, \beta\right)$, $\pmb{x} \mid \pmb{\theta} \sim \text{Exp}(\pmb{\theta})$,
    \item\label{expfam-2}: $\pmb{\theta} \sim \text{Beta}\left(\alpha, \beta\right)$, $\pmb{x} \mid \pmb{\theta} \sim \text{Ber}(\pmb{\theta})$,
    \item\label{expfam-3}: $\pmb{\theta} \sim \text{Dir}\left(\alpha_1, \alpha_2, \alpha_3\right)$, $\pmb{x} \mid \pmb{\theta} \sim \text{Cat}(\pmb{\theta})$.
\end{enumerate}
For details on the specific parameter transformations we use for these models,
see \cref{supplemental-section-exponential-families-constrained-optimization}. The coverages for both our method and Bernstein \& Sheldon's are shown in \cref{fig:expfam-coverages} including the coverages for the naive baseline $q_{\text{VI}}(\btheta \mid \bphi_{T})$. We use $\epsilon = 0.1$ for all the models. We also show the errors between the coverages and the perfect coverages in \cref{fig:expfam-coverages}. \cref{table:expfam-errors} summarizes the errors using the RMSE in \cref{rmse-definition}. From the coverages and coverage errors, we can see that both methods work similarly compared to the naive baseline. Our method performs better for \ref{expfam-1} and \ref{expfam-2}; however, Bernstein \& Sheldon's method works better for \ref{expfam-3}. The marginal coverages and the marginal coverage errors for \ref{expfam-3} are shown in \cref{supplemental-section-exponential-families-constrained-optimization}.

\begin{figure*}[ht]
  \centering
  \includegraphics[scale=1.0]{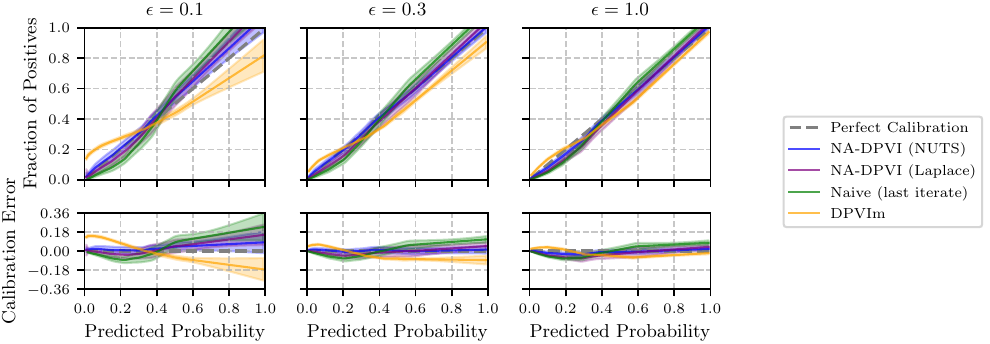}
  \caption{The first row in the figure shows the predictive calibration for the UCI Adult logistic regression experiment for NA-DPVI (NUTS), NA-DPVI (Laplace), last iterate DPVI, and DPVIm \citep{dpvim}. 
  The second row shows the calibration error (Fraction of Positives - Predicted Probability). 
  The solid lines show the average performance over $20$ independent repetitions, and the error bars show the corresponding std.
  The parameters for NA-DPVI are $\delta = 10^{-5}$, $\kappa = 0.1$ and $T = 10^4$.
  } 
  \label{fig:uci-adult-calibrations}
\end{figure*}

\subsection{10D Bayesian linear regression}
Noise-aware Bayesian linear regression was done before by \citep{BernsteinS19}; however their approach does not scale to many dimensions. The baselines we provide are Gibbs-SS-Noisy \citep{BernsteinS19}, DPVIm \citep{dpvim}, and by using the last iterate $\bphi_T$, i.e. constructing credible regions from the distribution $q_{\text{VI}}(\btheta; \bphi_T)$. We apply our method and compare it with the baselines over the following model:
\begin{align}
    \begin{split}
        &\left(\theta_i, \theta_{\sigma}^2\right) \sim \text{N-}\Gamma^{-1}\left(0, \frac{1}{4}, 20, \frac{1}{2}\right), \\
        &x_i \sim \mathcal{N}\left(0, 1\right), \quad y \mid \pmb{x}, \pmb{\theta} \sim \mathcal{N}\left(\pmb{w}^\top \pmb{x}, \theta_{\sigma}^2 \right),
    \end{split}
\end{align}
where the parameters are  $\pmb{\theta} = \left(\theta_1, \ldots, \theta_{10}, \theta_{11}, \theta_{\sigma}^2\right)$ and $\text{N-}\Gamma^{-1}$ is the normal-inverse-gamma distribution. We define $\pmb{w} = (\theta_1, \ldots, \theta_{11})$ and $\pmb{x} = (x_1, \ldots, x_{10}, 1)$. The bias term is $\theta_{11}$. For the constrained optimization problem, we only condition $\theta_{\sigma}^2$ and use the same function as \cref{softplus-conditioning}. 

We compute the coverages for our method using both NUTS and Laplace's approximation for $\epsilon \in \{0.1, 0.3, 1.0 \}$. The coverages for our method are shown in \cref{fig:linear-regression-10d-coverages} including the coverages for the naive baseline $q_{\text{VI}}(\btheta \mid \bphi_{T})$. We also show the errors between the coverages and the perfect coverages in \cref{fig:linear-regression-10d-coverages}. \cref{table:linear-regression-10d} summarizes the errors using the RMSE \cref{rmse-definition}. From the coverages and coverage errors, we can see that both methods work similarly compared to the naive baseline. However, NUTS performs better compared to Laplace's approximation.
\begin{table}[ht]
    \centering
    \caption{The RMSE errors corresponding to the Bayesian linear regression coverages in \cref{fig:linear-regression-10d-coverages}. Average RMSE $\pm$ std, with all the values scaled by $10^3$. $\delta = 10^{-5}$ in all experiments.}
    \resizebox{\columnwidth}{!}{
    \label{table:linear-regression-10d}
    \begin{tabular}{lccc}
        \hline
        \textbf{Method} & \textbf{$\epsilon = 0.1$} & \textbf{$\epsilon = 0.3$} & \textbf{$\epsilon = 1.0$}\\
        \hline
        NA-DPVI (NUTS) & \textbf{36 $\pm$ 11} & \textbf{35 $\pm$ 11} & \textbf{27 $\pm$ 9}\\
        NA-DPVI (Laplace) & 78 $\pm$ 11 & 98 $\pm$ 6 & 60 $\pm$ 7\\
        Naive (last iterate) & 512 $\pm$ 3 & 307 $\pm$ 3 & 360 $\pm$ 3\\
        Bernstein \& Sheldon & 301 $\pm$ 1 & 299 $\pm$ 1 & 323 $\pm$ 3\\
        DPVIm & 584 $\pm$ 0 & 584 $\pm$ 0 & 584 $\pm$ 0\\
        \hline
    \end{tabular}
    }
\end{table}

\subsection{UCI Adult Bayesian Logistic Regression}
We perform an experiment on the UCI Adult dataset \citep{adult_2} with the Bayesian logistic regression model with standard normal prior. No baseline method can scale to this problem other than the naive (last iterate) baseline and DPVIm \citep{dpvim}.
After learning the posterior for $\btheta$, we test the posterior predictive distribution 
given as
\begin{equation}\label{noise-aware-posterior-predictive}
    \widetilde{p}(y = 1 \mid \mathcal{T}, \pmb{x}) =  \int p(y = 1 \mid \pmb{\theta}, \pmb{x}) \widetilde{p}(\pmb{\theta}\mid \mathcal{T}) \mathrm{d}\btheta,
\end{equation}
where we replace the integral with its Monte-Carlo estimator.
We do $20$ repeats for each experiment and we plot the calibration curves for NA-DPVI (NUTS \& Laplace) and for the other baselines with $\epsilon \in \{0.1, 0.3, 1.0 \}$. The calibration curves and calibration errors are found in \cref{fig:uci-adult-calibrations}. We also compute the RMSE for the calibration errors in \cref{table:uci-adult-rmse}, this is the same as the square root of the Brier score \citep{VERIFICATIONOFFORECASTSEXPRESSEDINTERMSOFPROBABILITY}. From the calibrations and calibration errors, we can see that both NUTS and Laplace outperform the naive baseline and DPVIm. However, NUTS performs slightly better than Laplace's approximation. For more details about the experiment, see \Cref{uci-adult-experiment-details}.
\begin{table}[ht]
    \centering
    \caption{The RMSE errors corresponding to the logistic regression calibration in \cref{fig:uci-adult-calibrations}. Average RMSE $\pm$ std, with all the values scaled by $10^3$. $\delta = 10^{-5}$ in all experiments.}
    \label{table:uci-adult-rmse}
    \resizebox{\columnwidth}{!}{
    \begin{tabular}{lccc}
        \hline
        \textbf{Method} & \textbf{$\epsilon = 0.1$} & \textbf{$\epsilon = 0.3$} & \textbf{$\epsilon = 1.0$}\\
        \hline
        NA-DPVI (NUTS) & \textbf{61 $\pm$ 31} & \textbf{26 $\pm$ 5} & \textbf{24 $\pm$ 7}\\
        NA-DPVI (Laplace) & 84 $\pm$ 43 & 44 $\pm$ 11 & 46 $\pm$ 14\\
        Naive (last iterate) & 120 $\pm$ 65 & 67 $\pm$ 24 & 54 $\pm$ 17\\
        DPVIm & 101 $\pm$ 52 & 65 $\pm$ 25 & 38 $\pm$ 14\\
        \hline
    \end{tabular}
    }
\end{table}

\section{DISCUSSION}

Bayesian inference should be a very natural companion to DP because it naturally deals with uncertain information and should thus be able to easily tolerate the noise added for DP.
The fact that noise-aware inference is so difficult reminds us that commonly used general-purpose inference methods such as MCMC are not really Bayesian since they do not incorporate a mechanism for reasoning about the accuracy of their results.
In this sense, algorithms required for noise-aware inference share a similarity with probabilistic numerics \citep{HennigOK2022}.

We cite DPVIm \citep{dpvim} in our work regarding DPVI, which they extensively investigate; however, the noise-awareness concept in their work is not exactly equivalent to ours which we generalized from \citet{BernsteinS19}. Although our proposed NA-DPVI greatly expands the domain of noise-aware inference, it is still restricted by its reliance on potentially inaccurate VI to capture the data-modelling uncertainty. Additionally, our approximations only work under certain conditions, which may not always hold. An explicit privacy-utility-computation trade-off is very difficult to obtain for NA-DPVI, because there are several factors that are challenging to account for, theoretically speaking. These factors include DP-SGD, VI, the post-processing model, and the final approximate posterior obtained either in closed form or through an approximate inference method (MCMC, Laplace).

Another limitation in this work is that we did not account for the privacy leakage from hyper-parameter selection, which was done manually due to the high computational cost. Currently, many papers that employ DP-SGD ignore the privacy accounting of hyper-parameter tuning. For example, see Section 5.2 of \citet{Sander2023}. We applied heuristics to choose some of the hyper-parameters (e.g., the learning rate), and other hyper-parameters were shared among most experiments which reduces the privacy leakage. The state of privacy accounting for hyper-parameter tuning is still in its infancy, see Section 5.4 of \citet{Ponomareva2023}, and this presents an area for future research and improvement, especially for noise-aware inference.

The search for the ultimate noise-aware inference algorithm that could accurately capture both data-modelling uncertainty as well as uncertainty due to DP inference for arbitrary models remains an important goal for future research.

\subsubsection*{Acknowledgments}

This work was supported by the Research Council of Finland (Flagship programme: Finnish Center for Artificial Intelligence, FCAI, Grant 356499 and Grant 359111), the Strategic Research Council at the Research Council of Finland (Grant 358247) as well as the European Union (Project 101070617).
Views and opinions expressed are however those of the author(s) only and do not necessarily reflect those of the European Union or the European Commission. 
Neither the European Union nor the granting authority can be held responsible for them. 
This work has been performed using resources provided by the CSC – IT Center for Science, Finland (Project 2003275).

\bibliographystyle{myabbrvnat}
\bibliography{paper}

\appendix

\setcounter{figure}{0}
\renewcommand\thefigure{A.\arabic{figure}}
\setcounter{table}{0}
\renewcommand{\thetable}{A\arabic{table}}
\setcounter{equation}{0}
\renewcommand{\theequation}{A\arabic{equation}}

\onecolumn

\section{Second-Order Taylor Approximation of the Loss Function}\label{supplemental-section-taylor-approximation}
For any function $\mathcal{L}(\bphi)$, if $\nabla \mathcal{L}\left(\bphis\right) = 0$, then by using the second-order Taylor approximation around $\bphis$, we can write $\mathcal{L}(\bphi)$ as:
\begin{align}\label{second-order-taylor-approximation}
    \begin{split}
        \mathcal{L} \left(\pmb{\phi} \right) &\approx \mathcal{L} \left(\pmb{\phi}^{*} \right) + \nabla\mathcal{L} \left(\pmb{\phi}^{*} \right)^T (\pmb{\phi} - \pmb{\phi}^*) + \frac{1}{2}(\pmb{\phi} - \pmb{\phi}^*)^T\nabla^2 \mathcal{L} \left(\pmb{\phi}^{*} \right)(\pmb{\phi} - \pmb{\phi}^*) \\
        &\approx \mathcal{L} \left(\pmb{\phi}^{*} \right) + \frac{1}{2}(\pmb{\phi} - \pmb{\phi}^*)^T\nabla^2 \mathcal{L} \left(\pmb{\phi}^{*} \right)(\pmb{\phi} - \pmb{\phi}^*),
    \end{split}
\end{align}
where $\nabla^2 \mathcal{L} \left(\pmb{\phi}^{*} \right)$ is the Hessian of $\mathcal{L}$ at $\pmb{\phi}^{*}$. Taking the gradient of both sides of (\ref{second-order-taylor-approximation}) with respect to $\pmb{\phi}$, we obtain,
\begin{equation}
    \nabla\mathcal{L}\left(\pmb{\phi} \right) \approx \nabla^2\mathcal{L}\left(\pmb{\phi}^{*} \right) \left(\pmb{\phi} - \pmb{\phi}^*\right).
\end{equation}

\section{Proof of Theorem 1}\label{supplemental-section-proof-of-theorem-1}
\begin{proof}
    Let $\mathbf{A} = \nabla_{\bphi}^2 \mathcal{L}\left(\bphis; \bfD\right)$, then for all $t \geq T^{*}$, 
\begin{align*}
    &\left\lVert \pmb{g}_{t + 1} - \kappa  \nabla_{\pmb{\phi}} \mathcal{L}\left(\pmb{\phi}_t; \mathbf{D}\right) + \kappa \nabla_{\pmb{\phi}} \mathcal{L}\left(\pmb{\phi}_t; \mathbf{D}\right)  - \kappa  \mathbf{A}\left(\pmb{\phi}_t - \bphis \right) \right\rVert^2 \\
    &\leq 2\left\lVert \pmb{g}_{t + 1} - \kappa  \nabla_{\pmb{\phi}} \mathcal{L}\left(\pmb{\phi}_t; \mathbf{D}\right)\right\rVert^2  +  2\kappa^2 \left\lVert \nabla_{\pmb{\phi}} \mathcal{L}\left(\pmb{\phi}_t; \mathbf{D}\right)  - \mathbf{A}\left(\pmb{\phi}_t - \bphis \right) \right\rVert^2 \\
    &\leq 2\left\lVert \pmb{g}_{t + 1} -  \kappa \nabla_{\pmb{\phi}} \mathcal{L}\left(\pmb{\phi}_t; \mathbf{D}\right)\right\rVert^2 + 2\kappa^2 e_{\text{tay}}^2 \lVert\pmb{\phi}_t - \bphis \rVert^2 ,
\end{align*}
    taking the expectation of both sides with respect to the subsampling distribution and using the fact that for all $t \geq T^{*}$, $\lVert\pmb{\phi}_t - \bphis \rVert^2 < (r^{*})^2$, we obtain
    \begin{equation}
    \underset{\text{Ber}\left(\kappa\right)}{\boldee} \left[ \left\lVert \pmb{g}_{t + 1} - \kappa \mathbf{A}\left(\pmb{\phi}_t - \bphis \right) \right\rVert^2 \right] < e_{\text{approx}}^2.
    \end{equation}
    where $\mathrm{Ber}(\kappa)$ is the Bernoulli distribution with $p = \kappa$, which is the subsampling distribution.
\end{proof}

\section{Noise-Aware Posterior Approximation}\label{supplemental-section-post-processing-model-approximations}
Deriving the closed form of $p(\bphis, \mathbf{A} \mid \mathbf{T})$ might not always be feasible so we obtain the following approximations. The first is Laplace's approximation. Since the Hessian is positive-definite, we want $\mathbf{A}$ to be positive-definite as well, so we need a bijective differentiable function (diffeomorphism) $F$, such that $F(\mathbf{V}) = \mathbf{A}$ where $\mathbf{V}$ is an unconstrained version of $\mathbf{A}$. Sometimes, $F$ is referred to as a conditioning transformation \citep{Blei_2017}. Therefore,
\begin{equation*}
    p\left(\mathbf{V},\bphis \mid \mathcal{T} \right) = p\left(F(\mathbf{V}),\bphis \mid \mathcal{T} \right) \left\lvert J_{F}\left(\mathbf{V}\right) \right\rvert,
\end{equation*}
where $J_{F}$ is the Jacobian of $F$. To obtain Laplace's approximation first we need the MAP (Maximum a posteriori) estimate for both $\mathbf{V}$ and $\bphis$,
\begin{equation}\label{laplace-map-estimates}
    \left(\widehat{\mathbf{V}}_{\text{map}}, \widehat{\bphi}^{*}_{\text{map}}\right) = \underset{\left(\bphis, \mathbf{V}\right)}{\text{argmax}}\; \log p\left(\mathbf{V},\bphis \mid \mathcal{T} \right).
\end{equation}
Let $p_{\text{gauss}}\left(\mathbf{V}, \bphis\right)$ be the distribution
\begin{equation}\label{laplace-gaussian-distribution}
    \mathcal{N}\left( \left( \text{vec}\left(\widehat{\mathbf{V}}_{\text{map}}\right), \widehat{\bphi}^{*}_{\text{map}}\right), \Sigma_{\text{lap}} \right),
\end{equation}
where $\mathrm{vec}(\cdot)$ is the vector representation of a matrix. 
We can easily marginalize out $\widehat{\bphi}^{*}_{\text{map}}$ from $p_{\text{gauss}}\left(\mathbf{V}, \bphis\right)$ to approximate (\ref{post-processing-model-marginalized}) since it is a Gaussian distribution. We then use this marginalized distribution to approximate (\ref{approximate-noise-aware-posterior}) and obtain a new distribution that we will denote $\widetilde{p}_{\text{lap}}\left(\btheta \mid \mathcal{T} \right)$.
The second method to approximate (\ref{approximate-noise-aware-posterior}) is by using any Markov Chain Monte Carlo (MCMC) method to approximately sample from the post-processing model,
\begin{equation}\label{mcmc-samples}
    \left(\mathbf{A}_{i}, \bphis_{i}\right) \sim p\left( \mathbf{A}, \bphis \mid \mathcal{T} \right),\; 1 \leq i \leq N_{\text{mc}}.
\end{equation}
We can use these samples to approximate (\ref{approximate-noise-aware-posterior}) using the following mixture model,
\begin{equation}\label{mcmc-mixture-model}
        \widetilde{p}(\btheta \mid \mathcal{T}) \approx \frac{1}{N_{\text{mc}}} \sum_{i = 1}^{N_{\text{mc}}} q_{\text{VI}}\left(\btheta; \bphis_i \right) \equiv_{\text{def}} \widetilde{p}_{\text{mc}}(\btheta \mid \mathcal{T}).
\end{equation}

\section{Estimation of the Hessian Matrix and the Influence of the Learning Rate}\label{supplemental-section-estimation-of-hessian-learning-rate-relation}
From our experiments, we noticed that the uncertainty estimates of the approximate noise-aware posterior are sensitive to the estimation of the Hessian matrix $\mathbf{A}$. Through observation, we found that the accuracy of the estimation of $\mathbf{A}$ is highly correlated with the choice of the constant learning rate $\lambda$. This motivated us to conduct further analysis which we present in this section.

There is also an equivalent model to (\ref{gradient-model-trace-approximate}) which we call parameter-based model of the trace that is useful in this analysis,
\begin{equation}\label{parameter-model-trace-approximate}
    \bphi_{t + 1} \mid \bphi_{t}, \mathbf{A}, \bphis \sim \mathcal{N}\left(\bphi_{t} - \lambda \kappa\mathbf{A} \left(\bphi_t - \bphis\right), \lambda^2 \Sigma_{total}\right),
\end{equation}
where we assume that $\Sigma_{\text{sub}} = \pmb{0}$.

For Assumption \ref{Assumption-taylor-approximation-of-loss} to hold, the part of the trace $\mathcal{T}$ that is stationary around the optimum $\bphis$ (after $t = T^{*}$) needs to be close enough to $\bphis$. On the other hand, the trace should have enough variability around $\bphis$ for the inference to be accurate. Intuitively speaking, we want to sample parameters $\bphi_t$ and gradients $\pmb{g}_t$ around $\bphis$ to use that information for the post-processing model; and in particular, to estimate the Hessian that describes the curvature of the loss function around $\bphis$. For a rigorous analysis, we present the following Theorem with the full proof in Supplement \ref{supplemental-section-estimation-of-hessian}.
\begin{theorem}\label{var-of-hessian-trace-trade-off-Theorem}
    Under assumptions \ref{Assumption-existence-of-optimum-for-objective}, \ref{Assumption-taylor-approximation-of-loss}, \ref{Assumption-per-example-loss-is-lipschitz}, and \ref{Assumption-bounded-sgd-covariance} with the additional assumptions that $\mathbf{A}$ is a diagonal matrix, i.e. $\mathbf{A} = \mathrm{diag}(a_1, \ldots, a_d)$, and that $\bphis$ is known, if we set the prior for $\mathbf{A}$ to a uniform distribution whose support contains the MLE in its interior, then the MAP estimate of $\mathbf{A}$, $\widehat{\mathbf{A}}_{\text{map}} = \mathrm{diag}(\hat{a}_1, \ldots, \hat{a}_d)$ has to satisfy the following inequality,
    \begin{equation}\label{var-of-hessian-taylor-trade-off-inquality}
        r^{*} \times \underset{{1 \leq i \leq d}}{\max} \sqrt{\text{Var}\left[ \hat{a}_i \right]} \geq \frac{1}{\kappa} \sqrt{\frac{d}{T - T^{*}}}\sigma_{\text{DP}} C.
    \end{equation}
\end{theorem}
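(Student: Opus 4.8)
The plan is to exploit the fact that, once $\mathbf{A}$ is taken to be diagonal and $\bphis$ is treated as known, the parameter-based trace model in \cref{parameter-model-trace-approximate} decouples across the $d$ coordinates into $d$ independent simple linear regressions. Writing $\left(\bphi_t - \bphis\right)_i$ for the $i$-th offset and $\left(\bphi_{t+1} - \bphi_t\right)_i$ for the $i$-th increment, the model for coordinate $i$ reads $\left(\bphi_{t+1} - \bphi_t\right)_i = -\lambda \kappa\, a_i \left(\bphi_t - \bphis\right)_i + \varepsilon_{t,i}$ with $\varepsilon_{t,i} \sim \mathcal{N}\!\left(0, \lambda^2 \sigma_{\text{DP}}^2 C^2\right)$, since setting $\Sigma_{\text{sub}} = \pmb{0}$ forces $\Sigma_{\text{total}} = \sigma_{\text{DP}}^2 C^2 \mathbf{I}_d$. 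This is ordinary linear regression through the origin in the single unknown $a_i$, where only the stationary part of the trace ($t \geq T^{*}$) is used.

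First I would identify the MAP estimate and its associated (Laplace) variance. Under a uniform prior $a_i \sim \text{U}(\cdot, \cdot)$, the log-posterior for $a_i$ equals the Gaussian log-likelihood up to an additive constant on the support, so, provided the optimum lies in the interior of the support, the MAP $\hat{a}_i$ is the least-squares slope and the variance attached to it is the inverse Fisher information, namely the reciprocal of the negative second derivative of the log-likelihood. A direct computation of that second derivative gives $\kappa^2 S_i / (\sigma_{\text{DP}}^2 C^2)$, so that
\begin{equation}
    \text{Var}\left[\hat{a}_i\right] = \frac{\sigma_{\text{DP}}^2 C^2}{\kappa^2 S_i},
    \qquad
    S_i := \sum_{t = T^{*}}^{T} \left(\bphi_t - \bphis\right)_i^2 .
\end{equation}
Note that the learning rate $\lambda$ cancels exactly, matching the $\lambda$-free form of the claimed bound.

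Next I would convert the per-coordinate variances into a statement about the worst coordinate using the radius constraint supplied by \cref{Assumption-taylor-approximation-of-loss}, namely that the ball $\mathrm{B}_{r^{*}}(\bphis)$ contains $\bphi_t$ for all $t \geq T^{*}$, so $\lVert \bphi_t - \bphis \rVert^2 < (r^{*})^2$. Summing the design energies over coordinates gives $\sum_{i=1}^d S_i = \sum_{t = T^{*}}^{T} \lVert \bphi_t - \bphis \rVert^2 < (T - T^{*})(r^{*})^2$. A pigeonhole/averaging step then yields $\min_i S_i \leq \tfrac{1}{d} \sum_i S_i < (T - T^{*})(r^{*})^2 / d$, and since $\max_i \text{Var}[\hat{a}_i] = \sigma_{\text{DP}}^2 C^2 / (\kappa^2 \min_i S_i)$, we obtain $(r^{*})^2 \max_i \text{Var}[\hat{a}_i] > d\, \sigma_{\text{DP}}^2 C^2 / (\kappa^2 (T - T^{*}))$. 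Taking square roots produces \cref{var-of-hessian-taylor-trade-off-inquality}, with the left-hand side appearing naturally as $r^{*}$ times a standard deviation.

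The hard part is the step that links the individual coordinates to the joint statement. The quantity actually controlled by \cref{Assumption-taylor-approximation-of-loss} is the \emph{joint} squared norm $\lVert \bphi_t - \bphis \rVert^2$, not the individual coordinate offsets; it is precisely the averaging argument over the $d$ coordinates that transfers this single joint budget onto the worst-case coordinate and, in doing so, manufactures the $\sqrt{d}$ factor in the bound. A secondary technical point to handle carefully is justifying that the uniform-prior MAP lies in the interior of its support, so that the log-posterior curvature coincides with the likelihood curvature and the inverse-Fisher variance formula applies unchanged; I would argue this holds whenever the prior is taken wide enough that the boundary is inactive at the optimum.
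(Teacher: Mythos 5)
Your proposal is correct and takes essentially the same route as the paper's proof: a coordinate-wise least-squares/MLE characterization of the MAP under the uniform prior, the per-coordinate variance $\sigma_{\text{DP}}^2 C^2/(\kappa^2 S_i)$ with $S_i$ the design energy of coordinate $i$, and the combination of the resulting lower bound on $\sum_{i=1}^d S_i$ with the upper bound $(T-T^{*})(r^{*})^2$ from \cref{Assumption-taylor-approximation-of-loss} (your pigeonhole on $\min_i S_i$ is just a rearrangement of the paper's summing argument, and your use of the parameter-based model with $\lambda$ cancelling, and of posterior curvature rather than the sampling distribution of $\hat{a}_i$, are immaterial since both give the identical variance formula here). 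One remark: what your argument (and, in fact, the paper's own) actually establishes is the bound with $r^{*}$ times the maximum \emph{standard deviation} of $\hat{a}_i$, so the appearance of $\mathrm{Var}[\hat{a}_i]$ in \cref{var-of-hessian-taylor-trade-off-inquality} is a notational inconsistency you share with, rather than introduce into, the paper.
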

The analysis in \Cref{var-of-hessian-trace-trade-off-Theorem} can be done without the assumption that $\mathbf{A}$ is diagonal; however, this assumption makes the analysis easier and the relation more clear. We also perform our experiments with a diagonal $\mathbf{A}$ matrix so it makes sense to analyze this particular case. Inequality (\ref{var-of-hessian-taylor-trade-off-inquality}) entails that there is a trade-off between $\max_i \sqrt{\text{Var}\left[ \hat{a}_i \right]}$ and $r^{*}$ (the radius of the open ball that contains the iterates in $\mathcal{T}^{*}$), and this puts a limit on how well we can estimate the Hessian with a non-informative prior even if we assume perfect knowledge about $\bphis$.

Also, according to (\ref{parameter-model-trace-approximate}), other than $\Sigma_{\text{DP}}$, it is clear that the value of $\lambda$ affects the variance of the trace around $\bphis$. We show the relation between $\lambda$ and $ \boldee \left[ \lVert \bphi_{t} - \bphi^{*} \rVert^2 \right]$ through the Ornstein–Uhlenbeck (OU) SDE which is a known tool for analyzing variants of SGD \citep{DBLP:conf/icml/MandtHB16,DBLP:journals/jmlr/LiTE19,DBLP:conf/nips/LiMA21}. This is usually done by modeling the noise in the discrete-time SGD process through the Wiener process and then approximating it by a continuous-time process using the It\^{o} integral \citep{Rogers_Williams_2000}.
\begin{theorem}\label{stationary-variance-Theorem}
    Under assumptions \ref{Assumption-existence-of-optimum-for-objective}, \ref{Assumption-taylor-approximation-of-loss}, \ref{Assumption-per-example-loss-is-lipschitz}, and \ref{Assumption-bounded-sgd-covariance}, if the DP-SGD process can be well-approximated by a stationary OU SDE, then the stationary mean of $\lVert \bphi_t - \bphis \rVert^2$ is proportional to $\lambda \sigma_{\text{DP}}^2 C^2$. Moreover, for the special case when $\mathbf{A}$ is a diagonal matrix with positive diagonal entries, a sufficient condition for the stationary mean of $\lVert \bphi_t - \bphis \rVert^2$ to be less than $(r^*)^2$ is
    \begin{equation}
        \lambda < \frac{2 \kappa (r^{*})^2}{\sigma_{\text{DP}}^2 C^2 \mathbf{Tr}\left( \mathbf{A}^{-1} \right) }.
    \end{equation}
\end{theorem}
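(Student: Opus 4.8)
The plan is to work with the parameter-based model of the trace in \cref{parameter-model-trace-approximate} rather than the gradient model, since it directly describes the evolution of $\bphi_t$. First I would center the iterates by setting $\mathbf{y}_t = \bphi_t - \bphis$, which (using $\Sigma_{\text{sub}} = \pmb{0}$, so $\Sigma_{\text{total}} = \sigma_{\text{DP}}^2 C^2 \mathbf{I}_d$) turns the update into the linear stochastic recursion
\begin{equation*}
    \mathbf{y}_{t+1} = \left(\mathbf{I}_d - \lambda \kappa \mathbf{A}\right)\mathbf{y}_t + \lambda \sigma_{\text{DP}} C \, \boldsymbol{\eta}_t, \qquad \boldsymbol{\eta}_t \sim \mathcal{N}\left(\pmb{0}, \mathbf{I}_d\right),
\end{equation*}
which is a discrete-time (vector) Ornstein--Uhlenbeck process. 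Treating the learning rate $\lambda$ as the integration step $\mathrm{d}t$ in the standard SGD-as-SDE correspondence, I would match drift and diffusion: the deterministic part yields drift matrix $\kappa \mathbf{A}$, and since the discrete noise increment $\lambda \sigma_{\text{DP}} C \boldsymbol{\eta}_t$ has covariance $\lambda^2 \sigma_{\text{DP}}^2 C^2 \mathbf{I}_d$ while an SDE increment over $\mathrm{d}t = \lambda$ carries covariance $D\lambda$, the diffusion covariance must satisfy $D = \lambda \sigma_{\text{DP}}^2 C^2 \mathbf{I}_d$. This identifies the continuous OU SDE $\mathrm{d}\mathbf{y} = -\kappa \mathbf{A}\,\mathbf{y}\,\mathrm{d}t + \sqrt{\lambda}\,\sigma_{\text{DP}} C \,\mathrm{d}\mathbf{W}$.

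Next I would invoke the classical fact that such an OU process has a unique stationary Gaussian law whose covariance $\Sigma_{\infty}$ solves the continuous Lyapunov equation $\kappa \mathbf{A}\,\Sigma_{\infty} + \Sigma_{\infty}\,\kappa \mathbf{A}^{\top} = \lambda \sigma_{\text{DP}}^2 C^2 \mathbf{I}_d$. The stationary mean of $\lVert \bphi_t - \bphis \rVert^2$ is then $\mathbf{Tr}(\Sigma_{\infty})$, and because $\Sigma_{\infty}$ depends linearly on the right-hand side, $\mathbf{Tr}(\Sigma_{\infty})$ is proportional to $\lambda \sigma_{\text{DP}}^2 C^2$, which is the first claim. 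For the diagonal, non-singular case $\mathbf{A} = \mathrm{diag}(a_1,\ldots,a_d)$ the Lyapunov equation decouples entrywise into $2\kappa a_i (\Sigma_{\infty})_{ii} = \lambda \sigma_{\text{DP}}^2 C^2$, giving $(\Sigma_{\infty})_{ii} = \lambda \sigma_{\text{DP}}^2 C^2 / (2\kappa a_i)$ and hence $\mathbf{Tr}(\Sigma_{\infty}) = \frac{\lambda \sigma_{\text{DP}}^2 C^2}{2\kappa}\,\mathbf{Tr}(\mathbf{A}^{-1})$.

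Finally, to extract the bound on $\lambda$ I would connect this to \cref{Assumption-taylor-approximation-of-loss}: for the stationary part of the trace to lie inside the ball $\mathrm{B}_{r^{*}}(\bphis)$, the stationary mean-squared distance must not exceed $(r^{*})^2$, i.e.\ $\frac{\lambda \sigma_{\text{DP}}^2 C^2}{2\kappa}\,\mathbf{Tr}(\mathbf{A}^{-1}) < (r^{*})^2$, and rearranging for $\lambda$ reproduces the stated inequality exactly. The main obstacle is not the algebra but the rigor of the SDE approximation itself: passing from the discrete recursion to the continuous OU SDE is only heuristically valid, which is precisely why the theorem is phrased conditionally on the DP-SGD process being well-approximated by the OU SDE. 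A secondary subtlety is that bounding the stationary mean-squared distance by $(r^{*})^2$ is a mean-square rather than almost-sure notion of ``staying in the ball,'' so the link to \cref{Assumption-taylor-approximation-of-loss} should be read as an order-of-magnitude design criterion for $\lambda$ rather than a hard containment guarantee.
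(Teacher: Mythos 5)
Your proposal is correct and takes essentially the same route as the paper's proof: both pass from the centered discrete recursion (with $\Sigma_{\text{sub}} = \pmb{0}$) to a continuous Ornstein--Uhlenbeck SDE, solve the stationary Lyapunov equation (identical in both cases --- your choice of time step $\mathrm{d}t = \lambda$ versus the paper's unit step per iteration is a time rescaling that leaves the stationary law unchanged, as both reduce to $\mathbf{A}\Sigma + \Sigma\mathbf{A} = \frac{\lambda}{\kappa}\sigma_{\text{DP}}^2 C^2 \mathbf{I}_d$), specialize to diagonal non-singular $\mathbf{A}$ to get $\boldee\left[\lVert\bphi_t - \bphis\rVert^2\right] = \frac{\lambda \sigma_{\text{DP}}^2 C^2}{2\kappa}\mathbf{Tr}\left(\mathbf{A}^{-1}\right)$, and then use the ball containment $\lVert\bphi_t - \bphis\rVert < r^*$ from \cref{Assumption-taylor-approximation-of-loss} to extract the bound on $\lambda$. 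No gaps; your closing caveats about the heuristic nature of the SDE approximation and the mean-square versus almost-sure reading of the containment match the paper's own framing.
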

What \Cref{stationary-variance-Theorem} entails is that
\begin{equation*}
    \lambda \sigma_{\text{DP}}^2 C^2 \propto \boldee \left[ \lVert \bphi_{t} - \bphi^{*} \rVert^2 \right] < (r^{*})^2,
\end{equation*}
so if we make $\lambda$ large enough, then $\boldee \left[ \lVert \bphi_{t} - \bphi^{*} \rVert^2 \right]$ will increase proportional to $\lambda$ which will invalidate \Cref{Assumption-taylor-approximation-of-loss} (the second-order Taylor approximation of the loss function) by contraposition. The full proof of \Cref{stationary-variance-Theorem} and a further discussion is found in \Cref{supplemental-section-relation-learning-rate-taylor}.

To find a good learning rate, we analyze the convergence of DP-SGD with some additional assumptions and then find the learning rate that optimizes the convergence bound given in the following theorem. Also, we want to write the optimization problem as a minimization problem. If the optimization problem is initially a maximization problem, then we change the direction of optimization by redefining $\mathcal{L}$ as $-\mathcal{L}$.
\begin{theorem}\label{dpsgd-convergence-bound-theorem}
    Under assumptions \ref{Assumption-per-example-loss-is-lipschitz} and \ref{Assumption-bounded-sgd-covariance}, further assume that $\mathcal{L}$ has a lower bound, namely $M$, and the gradient $\nabla_{\bphi} \mathcal{L}$ is Lipschitz continuous with Lipschitz constant $H$ (so $\mathcal{L}$ is Lipschitz smooth), and that the per-example losses are Lipschitz continuous, then the following inequality holds,
    \begin{align}\label{equation-convergence-bound-dpsgd}
         \frac{1}{T}\sum_{t = 0}^{T - 1} \boldee \left[ \left\lVert \nabla \mathcal{L}(\bphi_{t}) \right\rVert^2 \right]  \leq \frac{\mathcal{L}\left(\bphi_{0}\right) - M}{T \lambda \kappa} + \frac{H}{2 \kappa} \left( 2e_{\text{sub}}^2 + 2\kappa^2 G^2 + \sigma_{\text{DP}}^2 C^2 d \right)  \lambda .
    \end{align}
\end{theorem}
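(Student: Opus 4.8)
The plan is to follow the classical descent-lemma analysis of stochastic gradient methods on a smooth nonconvex objective (in the spirit of Ghadimi--Lan), specialised to the DP-SGD update in \cref{dp-sgd-definition} with Poisson subsampling and treating the objective as a minimisation, per the convention in the surrounding text. Three facts drive the argument. By \cref{Assumption-per-example-loss-is-lipschitz} the clipping is redundant, so $\pmb{g}_{t+1} = \sum_{i \in \mathcal{B}_{t+1}} \nabla_{\bphi} \ell(\bphi_t; \pmb{x}_i)$; since each $\pmb{x}_i$ is kept independently with probability $\kappa$, this subsampled gradient is unbiased for the scaled full-data gradient, $\boldee[\pmb{g}_{t+1} \mid \bphi_t] = \kappa \nabla \calL(\bphi_t)$. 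The DP perturbation $\sigma_{\text{DP}} C \pmb{\eta}_{t+1}$ is zero-mean, independent of the minibatch and of $\bphi_t$, and satisfies $\boldee[\lVert \pmb{\eta}_{t+1} \rVert^2] = d$. Finally, Lipschitz continuity of the per-example losses bounds each $\lVert \nabla_{\bphi} \ell(\bphi_t; \pmb{x}_i) \rVert$, and summing over $i$ gives a uniform bound $\lVert \nabla \calL(\bphi_t) \rVert \le G$; this is where the $G$ in the statement comes from.

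First I would apply the $H$-smoothness descent lemma to consecutive iterates, writing $\bphi_{t+1} - \bphi_t = -\lambda \tildebg_{t+1}$ with $\tildebg_{t+1} = \pmb{g}_{t+1} + \sigma_{\text{DP}} C \pmb{\eta}_{t+1}$, to obtain $\calL(\bphi_{t+1}) \le \calL(\bphi_t) - \lambda \nabla\calL(\bphi_t)^{\top} \tildebg_{t+1} + \tfrac{H\lambda^2}{2}\lVert \tildebg_{t+1} \rVert^2$. Taking the conditional expectation given $\bphi_t$, the unbiasedness turns the linear term into $-\lambda\kappa \lVert \nabla\calL(\bphi_t)\rVert^2$, the strictly decreasing term. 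For the quadratic term I would use independence of the noise from the subsampling to split $\boldee[\lVert \tildebg_{t+1}\rVert^2 \mid \bphi_t] = \boldee[\lVert \pmb{g}_{t+1}\rVert^2 \mid \bphi_t] + \sigma_{\text{DP}}^2 C^2 d$, and then the bias-variance decomposition $\boldee[\lVert \pmb{g}_{t+1}\rVert^2 \mid \bphi_t] = \boldee[\lVert \pmb{g}_{t+1} - \kappa\nabla\calL(\bphi_t)\rVert^2 \mid \bphi_t] + \kappa^2 \lVert \nabla\calL(\bphi_t)\rVert^2$ (the cross term vanishing by unbiasedness). Bounding the first piece by $e_{\text{sub}}^2$ via \cref{Assumption-bounded-sgd-covariance} and the second by $\kappa^2 G^2$ yields $\boldee[\lVert \tildebg_{t+1}\rVert^2 \mid \bphi_t] \le e_{\text{sub}}^2 + \kappa^2 G^2 + \sigma_{\text{DP}}^2 C^2 d$.

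Combining these, taking the full expectation, and rearranging gives $\lambda\kappa\,\boldee[\lVert \nabla\calL(\bphi_t)\rVert^2] \le \boldee[\calL(\bphi_t)] - \boldee[\calL(\bphi_{t+1})] + \tfrac{H\lambda^2}{2}(e_{\text{sub}}^2 + \kappa^2 G^2 + \sigma_{\text{DP}}^2 C^2 d)$. Summing over $t = 0, \ldots, T-1$ telescopes the function-value differences to $\calL(\bphi_0) - \boldee[\calL(\bphi_T)]$, which I would upper bound using $\calL(\bphi_T) \ge M$; dividing through by $T\lambda\kappa$ then produces exactly \cref{equation-convergence-bound-dpsgd}. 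The calculation is routine; the only points requiring care are verifying that Poisson subsampling yields the clean unbiasedness $\boldee[\pmb{g}_{t+1}\mid\bphi_t] = \kappa\nabla\calL(\bphi_t)$ (so that the noise and subsampling variances add without cross terms) and extracting the uniform gradient bound $G$ from the per-example Lipschitz assumption, since these are what separate the DP-SGD bound from the textbook SGD statement.
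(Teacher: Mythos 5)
Your proposal is correct and follows essentially the same route as the paper's own proof: the $H$-smoothness descent lemma applied to $\bphi_{t+1}-\bphi_t = -\lambda\left[\pmb{g}_{t+1}+\sigma_{\text{DP}}C\pmb{\eta}_{t+1}\right]$, conditional expectation with Poisson-subsampling unbiasedness producing the $-\lambda\kappa\lVert\nabla\mathcal{L}(\bphi_t)\rVert^2$ descent term, the second-moment bound $e_{\text{sub}}^2+\kappa^2G^2+\sigma_{\text{DP}}^2C^2d$ with $G$ extracted from the per-example Lipschitz assumption, and telescoping with the lower bound $M$ before dividing by $T\lambda\kappa$. If anything, your explicit bias--variance decomposition of $\boldee\left[\lVert\tildebg_{t+1}\rVert^2 \mid \bphi_t\right]$ is slightly cleaner than the paper's, which invokes a pointwise inequality of the form $\lVert a+b\rVert^2 \leq \lVert a\rVert^2+\lVert b\rVert^2$ that is only justified after taking expectations (the cross terms vanish by unbiasedness and independence), exactly the point you flag as requiring care.
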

We obtain a heuristic based on finding $\lambda$ that optimizes the right-hand side of (\ref{equation-convergence-bound-dpsgd}) by taking the derivative, setting it to zero, and then solving for $\lambda$ from which we obtain a quantity proportional to the following,
\[
    \lambda_{\text{heur}} = \frac{\sqrt{2} \lambda_{c}}{ \sigma_{\text{DP}} C \sqrt{T d} },
\]
where $\lambda_{c} > 0$ is a hyper-parameter. See supplement \ref{supplemental-section-heuristic-learning-rate} for the full proof and other details. For our experiments, we mostly use $\lambda_{c} = 1$ or $\lambda_{c} = \sqrt{\frac{d}{2}}$. This heuristic makes it easier to tune the learning-rate because it sets $\lambda$ to the appropriate scale that is required by convergence results. We show experimentally through our experiments that this is a good choice for the learning rate.

\section{Proof of Theorem 2}\label{supplemental-section-estimation-of-hessian}
As said before, an intuitive explanation why the variance of the trace around $\bphis$ affects the estimation of the Hessian is that assuming no to little variability, i.e. $\bphi_t \approx \bphis$ for $t = T^{*}, \ldots, T$, then $\mathbf{A}\left(\bphi_t - \bphis \right) \approx \nabla \mathcal{L}\left(\bphi_t; \mathbf{D}\right)$ becomes $\mathbf{A}\left(\bphis - \bphis\right) \approx \nabla \mathcal{L}\left(\bphis; \mathbf{D}\right) = 0$. This implies that $\mathbf{A} \times 0 \approx 0$, and thus, in this case, it is impossible to estimate $\mathbf{A}$ from this equation. The following is the proof of \Cref{var-of-hessian-trace-trade-off-Theorem}.
\begin{proof}
    The MAP estimate of $\mathbf{A}$ under the stated uniform prior and with perfect knowledge about $\bphis$ is the same as the maximum likelihood estimate (MLE). Using the gradient-based model of the trace, and assuming that the subsampling noise $\Sigma_{\text{sub}}$ is $\mathbf{0}$, the likelihood of $\mathbf{A}$ could be written as:
    \[
        L\left(\mathbf{A} ; \tildebg_{T^{*} + 1}, \ldots, \tildebg_{T}, \bphi_{T^{*}}, \ldots, \bphi_{T - 1}\right) \propto \prod_{t = T^{*}}^{T - 1} p(\tildebg_{t + 1} \mid \bphi_t, \mathbf{A}),
    \]
    taking the logarithm of both sides:
    \[
        \log L\left(\mathbf{A}; \tildebg_{T^{*} + 1}, \ldots, \tildebg_{T}, \bphi_{T^{*}}, \ldots, \bphi_{T - 1}\right) \underset{c}{=} \sum_{t = T^{*}}^{T - 1} \log p\left(\tildebg_{t + 1} \mid \bphi_t, \mathbf{A}\right),
    \]
    where $\underset{c}{=}$ means equals with a constant difference. For each $t$, we have:
    \[
        \log p\left(\tildebg_{t + 1} \mid \bphi_t, \mathbf{A}\right) = -\frac{d}{2} \log 2\pi -  \frac{1}{2}\ \log \det\left(\Sigma_{\text{DP}}\right)- \frac{1}{2} \left( \tildebg_{t + 1} -\kappa  \mathbf{A}(\bphi_t - \bphis) \right)^\top\Sigma_{\text{DP}}^{-1}\left( \tildebg_{t + 1} - \kappa \mathbf{A}(\bphi_t - \bphis) \right),
    \]
    therefore,
    \begin{align}\label{hessian-negative-log-likelihood}
        - \log L\left(\mathbf{A} \right) \underset{c}{=} \frac{1}{2} \sum_{t=T^{*}}^{T - 1} \left( \tildebg_{t + 1} -\kappa  \mathbf{A}(\bphi_t - \bphis) \right)^\top\Sigma_{\text{DP}}^{-1}\left( \tildebg_{t + 1} - \kappa \mathbf{A}(\bphi_t - \bphis) \right).
    \end{align}
    
    If we want to find $\widehat{\mathbf{A}}$ that maximizes the likelihood $L(\mathbf{A})$, then we need to minimize $- \log L\left(\mathbf{A} \right)$ in (\ref{hessian-negative-log-likelihood}). Since $\Sigma_{\text{DP}} = \sigma_{\text{DP}}^2 C^2 \mathbf{I}_d$, this is equivalent to minimizing the sum:
    \begin{equation}\label{full-hessian-least-squares}
        \sum_{t=T^{*}}^{T - 1} \left\lVert \tildebg_{t + 1} - \kappa  \mathbf{A}(\bphi_t - \bphis) \right\rVert^2.
    \end{equation}
    Let $\mathbf{a} = (a_1, \ldots, a_d)$, then we can also re-write the sum (\ref{full-hessian-least-squares}) as:
    \begin{equation}
        \sum_{t=T^{*}}^{T - 1} \left\lVert \tildebg_{t + 1} - \kappa  \mathbf{a} \odot (\bphi_t - \bphis) \right\rVert^2,
    \end{equation}
    where the operation $\odot$ is element-wise multiplication. Hence, we want to find $\hat{\mathbf{a}}$ such that for all $1 \leq i \leq d$:
    \[
    \frac{\partial}{\partial \hat{a}_i} \sum_{t=T^{*}}^{T - 1} \left( \tildebg_{t + 1}^{(i)} - \kappa \hat{a}_{i} \left(\bphi_t^{(i)} - \bphi^{*(i)} \right) \right)^2 = 2 \kappa  \sum_{t=T^{*}}^{T - 1} \left(\kappa \hat{a}_i\left(\bphi_t^{(i)} - \bphi^{*(i)}\right) - \tildebg_{t + 1}^{(i)}  \right) \left(\bphi_t^{(i)} - \bphi^{*(i)}\right) = 0,
    \]
    where $\pmb{v}^{(i)}$ denotes the $i$th component of a vector $\pmb{v}$, in other words,
    \[
        \kappa \hat{a}_i \sum_{t=T^{*}}^{T - 1} \left(\bphi_t^{(i)} - \bphi^{*(i)}\right)^2 = \sum_{t=T^{*}}^{T - 1} \tildebg_{t + 1}^{(i)} \left(\bphi_t^{(i)} - \bphi^{*(i)}\right),
    \]
    \begin{equation}\label{hessian-diagonal-mle}
        \hat{a}_{i} = \frac{\sum_{t=T^{*}}^{T - 1} \tildebg_{t + 1}^{(i)} \left(\bphi_t^{(i)} - \bphi^{*(i)}\right)}{\kappa \sum_{t=T^{*}}^{T - 1} \left(\bphi_t^{(i)} - \bphi^{*(i)}\right)^2}.
    \end{equation}
    
    We are interested in the distribution of each $\hat{a}_i \mid \mathcal{T}$. The gradient-based model implies that:
    \begin{align}\label{hessian-estimator-distribution-derivation}
        \begin{split}
            \tildebg_{t + 1}^{(i)} \mid \mathcal{T} &\sim \mathcal{N}\left( \kappa a_i \left(\bphi_{t}^{(i)} - \bphi^{*(i)} \right), \sigma_{\text{DP}}^2 C^2 \right), \\
            \tildebg_{t + 1}^{(i)} \left(\bphi_{t}^{(i)} - \bphi^{*(i)}\right) \mid \mathcal{T} &\sim \mathcal{N}\left(\kappa  a_i \left(\bphi_{t}^{(i)} - \bphi^{*(i)} \right)^2, \sigma_{\text{DP}}^2 C^2 \left(\bphi_{t}^{(i)} - \bphi^{*(i)} \right)^2\right), \\
            \sum_{t = T^{*}}^{T - 1} \tildebg_{t + 1}^{(i)} \left(\bphi_{t}^{(i)} - \bphi^{*(i)}\right) \mid \mathcal{T} &\sim \mathcal{N}\left(\kappa  a_i \sum_{t = T^{*}}^{T - 1} \left(\bphi_{t}^{(i)} - \bphi^{*(i)} \right)^2,\sigma_{\text{DP}}^2 C^2 \sum_{t = T^{*}}^{T - 1} \left(\bphi_{t}^{(i)} - \bphi^{*(i)} \right)^2\right) ,\\
            \hat{a}_i \mid \mathcal{T} &\sim \mathcal{N}\left( a_i, \frac{\sigma_{\text{DP}}^2 C^2}{\kappa^2 \sum_{t = T^{*}}^{T - 1} \left(\bphi_{t}^{(i)} - \bphi^{*(i)} \right)^2}\right).
        \end{split}
    \end{align}
    From this, we can clearly see that the variance of $\hat{a}_i \mid \mathcal{T}$ is inversely proportional to the sum:
    \[
        \sum_{t = T^{*}}^{T - 1} \left(\bphi_{t}^{(i)} - \bphi^{*(i)} \right)^2.
    \]
    Let $e_{\text{std}} = \underset{{1 \leq i \leq d}}{\max} \sqrt{\text{Var}\left[\hat{a}_i\right]}$, then for all $i$,
    \[
       \frac{\sigma_{\text{DP}}^2 C^2}{\kappa^2\sum_{t = T^{*}}^{T - 1} \left(\bphi_{t}^{(i)} - \bphi^{*(i)} \right)^2} \leq e_{\text{std}}^2,
    \]
    re-arranging, we obtain
    \begin{equation}
        \sum_{t = T^{*}}^{T - 1} \left(\bphi_{t}^{(i)} - \bphi^{*(i)} \right)^2 \geq \frac{\sigma_{\text{DP}}^2 C^2}{\kappa^2 e_{\text{std}}^2}.
    \end{equation}
    Moreover,
    \begin{equation}\label{trace-spread-lower-bound}
        \sum_{t = T^{*}}^{T - 1} \left\lVert \bphi_t - \bphis \right\lVert^2 = \sum_{t = T^{*}}^{T - 1} \sum_{i = 1}^{d} \left(\bphi_{t}^{(i)} - \bphi^{*(i)} \right)^2 \geq \frac{d  \sigma_{\text{DP}}^2 C^2}{\kappa^2 e_{\text{std}}^2}.
    \end{equation}
    
    On the other hand, each $\left\lVert \bphi_t - \bphis \right\rVert$ cannot be too large for the second-order Taylor approximation of $\mathcal{L}\left(\bphi; \bfD\right)$ to be valid. We know that $\mathbf{A}$ is the Hessian of $\mathcal{L}\left(\bphi;\bfD\right)$ at $\bphis$, so:
    \[
    \lim_{\bphi \rightarrow \bphis} \frac{\lVert \nabla_{\bphi} \mathcal{L}\left(\bphi\right) - \nabla_{\bphi} \mathcal{L}\left(\bphis\right) - \mathbf{A}(\bphi - \bphis) \rVert}{\lVert \bphi - \bphis \rVert} = 0,
    \]
    since $\nabla_{\bphi} \mathcal{L}\left(\bphis; \bfD\right) = 0$,
    \[
    \lim_{\bphi \rightarrow \bphis} \frac{\lVert \nabla_{\bphi} \mathcal{L}\left(\bphi\right) - \mathbf{A}(\bphi - \bphis) \rVert}{\lVert \bphi - \bphis \rVert} = 0.
    \]
    In other words, for all $e_{\text{tay}} > 0$, there exists $e_{\text{trace}} > 0$ such that
    \[
        0 < \lVert\bphi - \bphis\rVert < e_{\text{trace}} \Longrightarrow \frac{\lVert \nabla \mathcal{L}\left(\bphi\right) - \mathbf{A}(\bphi - \bphis) \rVert}{\lVert \bphi - \bphis \rVert} < e_{\text{tay}}.
    \]
    Let $e_{\text{tay}}$ be as in \Cref{Assumption-taylor-approximation-of-loss}, then $e_{\text{trace}} = r^{*}$, and
    \begin{equation}\label{hessian-limit-trace-inequality}
        0 < \lVert\bphi_t - \bphis\rVert < r^{*},
    \end{equation}
    for all $t = T^{*}, \ldots, T - 1$. This implies that:
    \begin{equation}\label{trace-spread-upper-bound}
            0 < \sum_{t=T^{*}}^{T - 1} \lVert\bphi_t - \bphis\rVert^2 < (T - T^{*})(r^{*})^2.
    \end{equation}
    Combining (\ref{trace-spread-lower-bound}) and (\ref{trace-spread-upper-bound}), we obtain:
    \begin{equation}\label{trace-spread-bounds}
       \frac{d \sigma_{\text{DP}}^2 C^2}{\kappa^2 e_{\text{std}}^2} \leq \sum_{t=T^{*}}^{T - 1} \lVert\bphi_t - \bphis\rVert^2 < (T - T^{*}) (r^{*})^2.
    \end{equation}
    Finally, (\ref{trace-spread-bounds}) implies that:
    \begin{equation}
        e_{\text{std}} \times r^{*} \geq \frac{1}{\kappa} \sqrt{\frac{d}{T - T^{*}}}\sigma_{\text{DP}} C.
    \end{equation}
\end{proof}

\section{Proof of Theorem 3}\label{supplemental-section-relation-learning-rate-taylor}
For the second-order Taylor approximation to be accurate, certain values of the learning rate are valid. To show this, we will first approximate the discrete DP-SGD process using a stochastic differential equation (SDE). We provide the following proof for \Cref{stationary-variance-Theorem}.

\begin{proof}
    Let $\mathbf{X}^{\frac{1}{2}}$ denote the square root of a positive semidefinite matrix $\mathbf{X}$. According to the parameter-based model, assuming $\Sigma_{\text{sub}} = \mathbf{0}$,
    \begin{equation} \label{trace-model-near-optimal-1}
        \bphi_{t + 1} = \bphi_{t} - \lambda \kappa \mathbf{A} \left( \bphi_t - \bphi^{*} \right) + \lambda \Sigma_{\text{DP}}^{\frac{1}{2}}\pmb{\eta}_{t + 1},
    \end{equation}
    where $\pmb{\eta}_{t + 1} \sim \mathcal{N}\left(\mathbf{0}, \mathbf{I}_d\right)$, and $\pmb{\eta}_0 = \mathbf{0}$. The random variable $\pmb{\eta}_t$ can be written using the Wiener process $\mathbf{W}_t$:
    \[
        \pmb{\eta}_{t + 1} = \mathbf{W}_{t + 1} - \mathbf{W}_{t} \sim \mathcal{N}\left( \mathbf{0}, \mathbf{I}_d \right),
    \]
    this way, we can re-write equation (\ref{trace-model-near-optimal-1}) as:
    \begin{equation} \label{trace-model-near-optimal-2}
    \Delta \bphi_{t + 1} = - \lambda \kappa \mathbf{A} \left( \bphi_t - \bphi^{*} \right) + \lambda \Sigma_{\text{DP}}^{\frac{1}{2}} \Delta \mathbf{W}_{t + 1}.
    \end{equation}
    
    In continuous time, we can approximate (\ref{trace-model-near-optimal-2}) as:
    \begin{equation} \label{trace-model-near-optimal-3}
        \mathbf{d}\bphi_{t} =  - \lambda \kappa \mathbf{A} \left( \bphi_t - \bphi^{*} \right) \mathbf{d}t +  \lambda \Sigma_{\text{DP}}^{\frac{1}{2}} \mathbf{d}\mathbf{W}_t,
    \end{equation}
    This SDE is a special case of the multi-dimensional Ornstein-Uhlenbeck (OU) process. It is equivalent to the following integral form:
    \begin{equation} \label{trace-model-near-optimal-4}
        \bphi_{t} = \bphi_{0} - \int_{0}^{t} \lambda \kappa \mathbf{A} \left( \bphi_s - \bphi^{*} \right) \mathbf{d}s + \int_{0}^{t} \lambda \Sigma_{\text{DP}}^{\frac{1}{2}}  \mathbf{d}\mathbf{W}_s,
    \end{equation}
    
    To write equation (\ref{trace-model-near-optimal-3}) in the more common form, define $\pmb{\psi}_{t} = (\bphi_t - \bphi^{*})$, then
    \begin{equation} \label{trace-model-near-optimal-5}
        \mathbf{d}\pmb{\psi}_{t} =  - \lambda \kappa \mathbf{A} \pmb{\psi}_{t} \mathbf{d}t +\lambda \Sigma_{\text{DP}}^{\frac{1}{2}} \mathbf{d}\mathbf{W}_t.
    \end{equation}
    from the fact that
    \[
        \pmb{\psi}_{t} - \pmb{\psi}_{0} = \left( \bphi_t - \bphi^{*} \right) - \left( \bphi_0 - \bphi^{*} \right) = \bphi_t - \bphi_0,
    \]
    The solution of equation (\ref{trace-model-near-optimal-5}) is given by,
    \begin{align}
        &\pmb{\psi}_{t} = \pmb{e}^{-\lambda \kappa\mathbf{A} t} \pmb{\psi}_0 + \int_{0}^{t} \lambda \Sigma_{\text{DP}}^{\frac{1}{2}} \pmb{e}^{-\lambda \kappa\mathbf{A} (t - s)} \mathbf{d} \mathbf{W}_s \label{trace-model-near-optimal-6}  \\
        \iff &\bphi_{t} = \bphi^{*} + \pmb{e}^{-\lambda \kappa\mathbf{A} t} \left(\bphi_0 -\bphi^{*}\right)  +  \int_{0}^{t} \lambda \Sigma_{\text{DP}}^{\frac{1}{2}} \pmb{e}^{-\lambda\kappa \mathbf{A} (t - s)} \mathbf{d} \mathbf{W}_s ,\label{trace-model-near-optimal-7}
    \end{align}
    where $\pmb{e}^{-\lambda \kappa \mathbf{A} t}$ is the matrix exponent of $-\lambda \kappa \mathbf{A} t$,
    \[
        \pmb{e}^{-\lambda\kappa \mathbf{A} t} = \sum_{k = 0}^{\infty} \frac{(-\lambda\kappa t)^k}{k!} \mathbf{A}^{k} = \mathbf{I}_d - \lambda \kappa t \mathbf{A} + \frac{\lambda^2 \kappa^2 t^2}{2}\mathbf{A}^2 + \ldots.
    \]
    Note that
    \[
    \boldee\left[ \bphi_{t} \right] = \bphi^{*} + \pmb{e}^{-\lambda \kappa\mathbf{A} t} \left(\bphi_0 -\bphi^{*}\right).
    \]
    We are interested in the stationary distribution of $\bphi_{t}$, so from (\ref{trace-model-near-optimal-6}, \ref{trace-model-near-optimal-7}):
    \[
        \boldee\left[ \bphi_{t} \right] \underset{t \rightarrow \infty}{\longrightarrow} \bphi^{*},
    \]
    \[
        \text{Cov}\left[ \bphi_{t} \right] \underset{t \rightarrow \infty}{\longrightarrow} \Sigma_{\text{sde}},
    \]
    where $\Sigma_{\text{sde}}$ is the solution of the following Lyapunov equation:
    \begin{align}\label{sde-covariance-solution}
        \begin{split}
            &\lambda \kappa\mathbf{A} \Sigma_{\text{sde}} + \lambda\kappa \Sigma_{\text{sde}} \mathbf{A}^\top = \lambda^2 \sigma_{\text{DP}}^2 C^2 \mathbf{I}_d \\
        \iff & \mathbf{A} \Sigma_{\text{sde}} + \Sigma_{\text{sde}} \mathbf{A} = \frac{\lambda}{\kappa}\sigma_{\text{DP}}^2 C^2 \mathbf{I}_d.
        \end{split}
    \end{align}
    Hence, given that $\mathbf{A}$ is constant with respect to $\lambda$ (since it only depends on the $\mathcal{L}$, $\bfD$, and $\bphis$), then $\Sigma_{\text{sde}}$ is proportional to $\lambda$. Now inequality (\ref{hessian-limit-trace-inequality}), implies that:
    \begin{equation}\label{trace-error-sde-inequality}
        (r^{*})^2 > \boldee \left[ \lVert \bphi_{t} - \bphi^{*} \rVert^2 \right] = \text{Tr}\left(\Sigma_{\text{sde}}\right) \propto \frac{\lambda}{\kappa} \sigma_{\text{DP}}^2 C^2.
    \end{equation}
    For the special case when $\mathbf{A}$ is diagonal with positive diagonal entries, then $\Sigma_{\text{sde}}$ is a diagonal matrix since it has to satisfy (\ref{sde-covariance-solution}) and the right hand side is a diagonal matrix.
    
    If $\mathbf{A} = \mathrm{diag} \left(a_1, a_2, \ldots, a_d\right)$ with $a_k > 0$ for all $k$, then $\Sigma_{\text{sde}} = \mathrm{diag}\left( \sigma_{\text{sde}_1}^2, \sigma_{\text{sde}_2}^2, \ldots, \sigma_{\text{sde}_d}^2\right)$, and (\ref{sde-covariance-solution}) implies that:
    \[
        \sigma_{\text{sde}_{k}}^2 = \frac{\lambda \sigma_{\text{DP}}^2 C^2}{2 a_k \kappa}
    \]
    or
    \[
        \sigma_{\text{sde}_{k}} = \sqrt{\frac{\lambda}{2 a_k \kappa}} \sigma_{\text{DP}} C.
    \]
    Hence, $\left\lVert \bphi_{t} - \bphi^{*} \right\lVert^2$ has the following stationary mean:
    \[
        \sum_{k = 1}^{d} \frac{\lambda \sigma_{\text{DP}}^2 C^2}{2 a_k \kappa} = \frac{\lambda \sigma_{\text{DP}}^2 C^2}{2 \kappa} \sum_{k = 1}^{d} \frac{1}{a_k}  = \frac{\lambda \sigma_{\text{DP}}^2 C^2}{2 \kappa} \mathbf{Tr}\left( \mathbf{A}^{-1} \right),
    \]
    thus,
    \begin{equation}\label{trace-average-spread}
        \boldee \left[ \lVert \bphi_{t} - \bphi^{*} \lVert^2 \right] = \frac{\lambda \sigma_{\text{DP}}^2 C^2}{2 \kappa}\mathbf{Tr}\left( \mathbf{A}^{-1} \right).
    \end{equation}
    Now inequality (\ref{hessian-limit-trace-inequality}), implies that:
    \[
     (r^{*})^2 > \frac{\lambda \sigma_{\text{DP}}^2 C^2}{2 \kappa}\mathbf{Tr}\left( \mathbf{A}^{-1} \right).
    \]
    Therefore,
    \begin{equation}\label{lambda-inequality-sde-trace}
         \lambda < \frac{2 \kappa (r^{*})^2}{\sigma_{\text{DP}}^2 C^2 \mathbf{Tr}\left( \mathbf{A}^{-1} \right) }.
    \end{equation}
\end{proof}

Neither inequality (\ref{lambda-inequality-sde-trace}) nor inequality (\ref{trace-error-sde-inequality}) helps us choose the learning rate since the relation between them and inequality (\ref{hessian-limit-trace-inequality}) is not a logical equivalence and we typically don't have any information about $\mathbf{A}$ in advance. The reason why we approximated the DP-SGD process using the OU SDE is that it is easier to work with. That said, it is possible to do a similar analysis with the discrete process but with some additional assumptions. To see this, observe that (\ref{trace-model-near-optimal-1}) can be written as (when $t = T - 1$):
\[
    \bphi_{T} = \left(\mathbf{I}_{d} - \lambda  \kappa \mathbf{A}\right)\bphi_{T - 1} + \lambda \kappa \mathbf{A} \bphis + \lambda  \Sigma_{\text{DP}}^{\frac{1}{2}} \pmb{\eta}_{T},
\]
and that this could be expanded as:
\[
    \bphi_{T} = \left(\mathbf{I}_{d} - \lambda \kappa \mathbf{A}\right)^{T}\bphi_{0} + \lambda \kappa\sum_{t = 0}^{T - 1} \left(\mathbf{I}_{d} - \lambda \kappa \mathbf{A}\right)^{t} \mathbf{A} \bphis + \lambda   \sum_{t = 0}^{T - 1} \left(\mathbf{I}_{d} - \lambda \kappa\mathbf{A}\right)^{t}\Sigma_{\text{DP}}^{\frac{1}{2}}\pmb{\eta}_{t + 1}.
\]
Since $\mathbf{A}$ is a symmetric matrix and thus is diagonalizable, there is a diagonal matrix $\mathbf{L}$ and an orthogonal matrix $\mathbf{U}$ such that $\mathbf{U} \mathbf{U}^\top = \mathbf{I}_{d}$ and $\mathbf{A} = \mathbf{U} \mathbf{L} \mathbf{U}^\top$, then $\mathbf{I}_{d} - \lambda \mathbf{L}$ is also a diagonal matrix, Let $l$ be the absolute value of the largest entry in the diagonal of $\mathbf{L}$, if $\lambda < \frac{1}{l\kappa}$, then
\begin{equation}\label{alternative-discrete-analysis-1}
    \left(\mathbf{I}_{d} - \lambda\kappa \mathbf{A}\right)^{T} \bphi_{0}  = \left(\mathbf{I}_{d} - \lambda\kappa \mathbf{U}\mathbf{L}\mathbf{U}^{\top}\right)^{T} \bphi_{0} =\left( \mathbf{U}\left(\mathbf{I}_{d} - \lambda\kappa \mathbf{L}\right)\mathbf{U}^{\top}\right)^{T} \bphi_{0}  =  \mathbf{U}\left(\mathbf{I}_{d} - \lambda \kappa \mathbf{L}\right)^{T} \mathbf{U}^\top \bphi_{0} \overset{T \rightarrow \infty}{\longrightarrow} \pmb{0}.
\end{equation}
Further assume that the diagonal entries of $\mathbf{L}$ are all non-zero so that the matrix $\mathbf{A}$ is invertible, let $\mathbf{Q} = \mathbf{I}_{d} - \lambda \kappa\mathbf{A}$ then, applying the geometric series sum for matrices,
\begin{align}\label{alternative-discrete-analysis-2}
    \begin{split}
        \lambda\kappa \sum_{t = 0}^{T - 1} \mathbf{Q}^{t} \mathbf{A} \bphis &=\lambda\kappa\left(\sum_{t = 0}^{T - 1} \mathbf{Q}^{t} \right) \mathbf{A} \bphis  \\
        &= \lambda\kappa\left(\left(\mathbf{I}_{d} - \mathbf{Q}^{T} \right)\left(\mathbf{I}_{d} - \mathbf{I}_{d} + \lambda \kappa\mathbf{A}\right)^{-1} \right) \mathbf{A} \bphis  \\
         &=\lambda\kappa \left(\left(\mathbf{I}_{d} - \mathbf{Q}^{T} \right)\left(\lambda \kappa \mathbf{A}\right)^{-1} \right) \mathbf{A} \bphis  \\
         &= \frac{\lambda\kappa}{\lambda\kappa} \left(\mathbf{I}_{d} - \mathbf{Q}^{T} \right) \bphis \\
         &= \left(\mathbf{I}_{d}- \left(\mathbf{I}_{d} - \lambda\kappa\mathbf{U} \mathbf{L}\mathbf{U}^{\top}\right)^{T} \right) \bphis \\
         &= \left(\mathbf{I}_{d}- \left(\mathbf{U}\left(\mathbf{I}_{d} - \lambda\kappa \mathbf{L}\right)\mathbf{U}^{\top}\right)^{T} \right) \bphis \\         
         &= \left(\mathbf{I}_{d}- \mathbf{U} \left(\mathbf{I}_{d} - \lambda\kappa\mathbf{L}\right)^{T} \mathbf{U}^{\top} \right) \bphis \\
         &=  \mathbf{U}\left(\mathbf{I}_{d}- \left(\mathbf{I}_{d} - \lambda\kappa \mathbf{L}\right)^{T} \right) \mathbf{U}^\top \bphis \overset{T \rightarrow \infty}{\longrightarrow} \bphis.
    \end{split}
\end{align}
Since $\Sigma_{\text{DP}}$ is $\sigma_{\text{DP}}^2 C^2 \mathbf{I}_d$, it commutes with $ \mathbf{Q}^{t}$, thus from the fact that the noises $\pmb{\eta}_{t}$ are i.i.d.,
\begin{align}\label{alternative-discrete-analysis-3}
    \begin{split}
         \lambda   \sum_{t = 0}^{T - 1} \mathbf{Q}^{t}\Sigma_{\text{DP}}^{\frac{1}{2}}\pmb{\eta}_{t + 1} \Bigm| \bphi_{0} &\sim \mathcal{N}\left(\pmb{0}, \lambda^2\sigma_{\text{DP}}^2 C^2  \sum_{t = 0}^{T - 1} \mathbf{Q}^{2t}\right) \\
         &\sim \mathcal{N}\left(\pmb{0}, \lambda^2\sigma_{\text{DP}}^2 C^2  \left(\mathbf{I}_{d} - \mathbf{Q}^{2T} \right)\left(\mathbf{I}_{d} - \mathbf{Q}^2\right)^{-1} \right).
    \end{split}
\end{align}
Hence, from (\ref{alternative-discrete-analysis-1}, (\ref{alternative-discrete-analysis-2}), and (\ref{alternative-discrete-analysis-3}),
\[
    \boldee\left[ \bphi_{T}\right] \overset{T \rightarrow \infty}{\longrightarrow} \bphis,
\]
\[
    \text{Cov}\left[ \bphi_{T}\right] \overset{T \rightarrow \infty}{\longrightarrow}  \lambda^2 \sigma_{\text{DP}}^2 C^2 \left(\mathbf{I}_{d} - \mathbf{Q}^2\right)^{-1}.
\]
Therefore,
\begin{align*}
    \boldee \left[ \lVert \bphi_{T} - \bphis\rVert^2\right] \overset{T \rightarrow \infty}{\longrightarrow} & \lambda^2 \sigma_{\text{DP}}^2 C^2 \text{Tr}\left(\mathbf{U}^{\top} \left(\mathbf{I}_{d} - \left(\mathbf{I}_{d} - \lambda \kappa\mathbf{L}\right)^2\right)^{-1}\mathbf{U} \right) \\
    &= \lambda^2 \sigma_{\text{DP}}^2 C^2 \text{Tr}\left(\left(\mathbf{I}_{d} - \left(\mathbf{I}_{d} - \lambda \kappa\mathbf{L}\right)^2 \right)^{-1}\right) \\
    &= \lambda^2 \sigma_{\text{DP}}^2 C^2 \sum_{i = 1}^{d} \frac{1}{1 - (1 - \lambda\kappa l_i)^2} \\
    &= \lambda^2 \sigma_{\text{DP}}^2 C^2 \sum_{i = 1}^{d} \frac{1}{1 - 1 + 2 \lambda \kappa l_i -  \lambda^2 \kappa^2 l_i^2} \\
    &= \frac{\lambda \sigma_{\text{DP}}^2 C^2}{\kappa} \sum_{i = 1}^{d} \frac{1}{2 l_i  -  \lambda l_i^2}
\end{align*}
where $l_i$ ($1 \leq i \leq d$) are the diagonal entries of $\mathbf{L}$.

\section{Proof of Theorem 4}\label{supplemental-section-heuristic-learning-rate}
We state the additional assumptions of \Cref{dpsgd-convergence-bound-theorem} again,
\begin{assumption}\label{assumption-bounded-below} $\mathcal{L}\left(\bphi; \mathbf{D}\right)$ is bounded below, i.e. $M = \inf_{\bphi} \mathcal{L}\left(\bphi; \mathbf{D}\right)$ exists.
\end{assumption}

\begin{assumption}\label{assumption-lipchitz-continuous} The per-example losses are Lipschitz continuous $\ell\left(\bphi; \pmb{x}_{i} \right)$ in $\bphi$. Further, assume that the clipping threshold is set to 
\[C = \max_i \sup_{\bphi} \lVert  \nabla_{\bphi} \ell\left(\bphi; \pmb{x}_{i} \right) \rVert .\]
\end{assumption}

\begin{assumption}\label{assumption-loss-lipschitz-smooth}
The loss $\mathcal{L}$ is Lipschitz smooth, i.e. that the gradients $\nabla \mathcal{L}(\pmb{\phi})$ are Lipschitz continuous with Lipschitz constant $H$.
\end{assumption}

Since the per-example losses are assumed to be Lipschitz continuous, then
\[
    \lVert\nabla_{\bphi} \mathcal{L}\left(\bphi; \mathbf{D}\right) \rVert \leq \sum_{i = 1}^{N} \lVert \nabla_{\bphi} \ell\left(\bphi; \pmb{x}_{i} \right)\rVert \leq N C,
\]
so $\mathcal{L}$ is Lipschitz continuous, also
\[
    \sup_{\bphi} \lVert\nabla_{\bphi} \mathcal{L}\left(\bphi; \mathbf{D}\right) \rVert = G \leq N C.
\]

We will denote $\mathcal{L}\left(\bphi; \bfD\right)$ as $\mathcal{L}\left(\bphi\right)$ for short. Also, we want to write the optimization problem as a minimization problem. If the optimization problem is initially a maximization problem, then we change the direction of optimization by redefining $\mathcal{L}$ as $-\mathcal{L}$. The following lemma is a well-known result; however we prove it within this context.
\begin{lemma}\label{lemma-inequality-lipschitz}
    If $\mathcal{L}\left(\bphi\right)$ is Lipschitz smooth, then for any $\bphi_1, \bphi_2 \in \bphi$, the following inequality holds:
    \[
        \mathcal{L}\left(\bphi_2\right) \leq \mathcal{L}\left(\bphi_1\right) + \nabla \mathcal{L}\left(\bphi_1\right)^{\top}(\bphi_2 - \bphi_1 ) + \frac{H}{2} \lVert\bphi_2 - \bphi_1 \rVert^2.
    \]
\end{lemma}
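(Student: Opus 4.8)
The plan is to prove this standard descent lemma by reducing the multivariate statement to a one-dimensional integral along the segment joining $\bphi_1$ and $\bphi_2$, and then controlling the integrand using the Lipschitz property of the gradient. First I would introduce the auxiliary scalar function $g(t) = \mathcal{L}(\bphi_1 + t(\bphi_2 - \bphi_1))$ for $t \in [0,1]$, so that $g(0) = \mathcal{L}(\bphi_1)$ and $g(1) = \mathcal{L}(\bphi_2)$. By the chain rule $g'(t) = \nabla \mathcal{L}(\bphi_1 + t(\bphi_2 - \bphi_1))^{\top} (\bphi_2 - \bphi_1)$, and the fundamental theorem of calculus then gives
\[
    \mathcal{L}(\bphi_2) - \mathcal{L}(\bphi_1) = \int_0^1 \nabla \mathcal{L}\left(\bphi_1 + t(\bphi_2 - \bphi_1)\right)^{\top} (\bphi_2 - \bphi_1) \diff t.
\]

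Next I would add and subtract the linear term $\nabla \mathcal{L}(\bphi_1)^{\top}(\bphi_2 - \bphi_1)$ inside the integral, thereby splitting the right-hand side into the first-order term (which integrates trivially, being constant in $t$) plus a remainder
\[
    \int_0^1 \left[ \nabla \mathcal{L}\left(\bphi_1 + t(\bphi_2 - \bphi_1)\right) - \nabla \mathcal{L}(\bphi_1) \right]^{\top} (\bphi_2 - \bphi_1) \diff t.
\]
To bound this remainder I would apply the Cauchy--Schwarz inequality pointwise in $t$ and then invoke Lipschitz smoothness via the estimate $\lVert \nabla \mathcal{L}(\bphi_1 + t(\bphi_2 - \bphi_1)) - \nabla \mathcal{L}(\bphi_1) \rVert \leq H\, t\, \lVert \bphi_2 - \bphi_1 \rVert$, so the integrand is at most $H\, t\, \lVert \bphi_2 - \bphi_1 \rVert^2$. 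Integrating $t$ over $[0,1]$ contributes the factor $\tfrac{1}{2}$, producing exactly the claimed $\tfrac{H}{2}\lVert \bphi_2 - \bphi_1 \rVert^2$ term and completing the inequality.

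I do not anticipate a genuine obstacle, as this is the classical quadratic upper bound implied by Lipschitz-continuous gradients; the only points requiring mild care are the differentiability needed to apply the fundamental theorem of calculus (ensured by the smoothness of $\mathcal{L}$) and keeping the direction of the inequality correct when passing from the inner product to its Cauchy--Schwarz upper bound.
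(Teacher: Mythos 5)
Your proposal is correct and follows essentially the same route as the paper's own proof: defining the scalar function along the segment, applying the fundamental theorem of calculus, adding and subtracting the linear term $\nabla \mathcal{L}(\bphi_1)^{\top}(\bphi_2 - \bphi_1)$, and bounding the remainder via Cauchy--Schwarz together with the Lipschitz estimate $H t \lVert \bphi_2 - \bphi_1 \rVert$ before integrating. No gaps; the argument matches the paper step for step.
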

\begin{proof}
    Define $\gamma(t)$ = $\mathcal{L}\left(\bphi_1 + t (\bphi_2 - \bphi_1)\right)$. By applying the fundamental theorem of calculus,
    \[
        \mathcal{L}\left(\bphi_2\right) - \mathcal{L}\left(\bphi_1\right) = \int_{t = 0}^{1} \gamma'(t) \mathrm{d}t = \int_{t = 0}^{1} \nabla \mathcal{L}\left(\bphi_1 + t (\bphi_2 - \bphi_1)\right)^{\top} (\bphi_2 - \bphi_1 ) \mathrm{d}t.
    \]
    Rearranging the terms,
    \[
        \mathcal{L}\left(\bphi_2\right) = \mathcal{L}\left(\bphi_1\right) + \int_{t = 0}^{1}  \nabla \mathcal{L}\left(\bphi_1 + t (\bphi_2 - \bphi_1)\right)^{\top} (\bphi_2 - \bphi_1 ) \mathrm{d}t,
    \]
    then adding and subtracting $\nabla \mathcal{L}\left( \bphi_1\right)^{\top} (\bphi_2 - \bphi_1 )$ to the right side we obtain
    \begin{align}\label{lemma-inequality-lipschitz-1}
        \begin{split}
            \mathcal{L}\left(\bphi_2\right) = \mathcal{L}\left(\bphi_1\right) + \nabla \mathcal{L}\left( \bphi_1\right)^{\top} (\bphi_2 - \bphi_1 ) + \int_{t = 0}^{1}  \left( \nabla \mathcal{L} \left(\bphi_1 + t (\bphi_2 - \bphi_1)\right) - \nabla \mathcal{L} ( \bphi_1)\right)^{\top} (\bphi_2 - \bphi_1 )\; \mathrm{d}t.
        \end{split}
    \end{align}
    Applying the Cauchy-Schwarz inequality,
    \begin{equation}\label{lemma-inequality-lipschitz-2}
        \left( \nabla \mathcal{L} \left(\bphi_1 + t (\bphi_2 - \bphi_1)\right) - \nabla \mathcal{L} ( \bphi_1)\right)^{\top} (\bphi_2 - \bphi_1 ) \leq \left \lVert \nabla \mathcal{L} \left(\bphi_1 + t (\bphi_2 - \bphi_1)\right) - \nabla \mathcal{L} ( \bphi_1)\right\rVert \lVert \bphi_2 - \bphi_1 \rVert.
    \end{equation}
    From the Lipschitz smoothness condition,
    \begin{equation}\label{lemma-inequality-lipschitz-3}
        \left \lVert \nabla \mathcal{L} \left(\bphi_1 + t (\bphi_2 - \bphi_1)\right) - \nabla \mathcal{L} ( \bphi_1)\right\rVert \leq H t \lVert \bphi_2 - \bphi_1 \rVert.
    \end{equation}
    From (\ref{lemma-inequality-lipschitz-1}), (\ref{lemma-inequality-lipschitz-2}), and (\ref{lemma-inequality-lipschitz-3}),
    \begin{align*}
        \mathcal{L}\left(\bphi_2\right) &\leq \mathcal{L}\left(\bphi_1\right) + \nabla \mathcal{L} ( \bphi_1)^{\top} (\bphi_2 - \bphi_1 ) + \int_{t = 0}^{1} H t \lVert \bphi_2 - \bphi_1 \rVert^2 \mathrm{d}t \\
        &\leq \mathcal{L}\left(\bphi_1\right) + \nabla \mathcal{L} ( \bphi_1)^{\top} (\bphi_2 - \bphi_1 ) + \frac{H}{2} \lVert \bphi_2 - \bphi_1 \rVert^2.
    \end{align*}
\end{proof}

Now the following is the proof of \Cref{dpsgd-convergence-bound-theorem}:
\begin{proof}
From lemma [\ref{lemma-inequality-lipschitz}],
\begin{equation}\label{dp-gd-convergence-inequality-1}
    \mathcal{L}(\bphi_{t + 1}) - \mathcal{L}(\bphi_{t}) \leq \nabla \mathcal{L}(\bphi_{t})^{\top} (\bphi_{t + 1} - \bphi_{t}) + \frac{H}{2}\left\lVert \bphi_{t + 1} - \bphi_t \right\rVert^2.
\end{equation}
Since for any $t$, we have:
\[
\bphi_{t + 1} - \bphi_t = -\lambda \left[ \pmb{g}_{t + 1} + \sigma_{\text{DP}} C \pmb{\eta}_{t + 1} \right],
\]
then we can write:
\begin{align}
    \begin{split}
        \nabla \mathcal{L}(\bphi_{t})^{\top} (\bphi_{t + 1} - \bphi_{t}) &= -\lambda \nabla \mathcal{L}(\bphi_{t})^{\top} \left[ \pmb{g}_{t + 1} + \sigma_{\text{DP}} C \pmb{\eta}_{t + 1} \right] \\
        &= -\lambda \nabla \mathcal{L}(\bphi_{t})^{\top} \pmb{g}_{t + 1} -\lambda \sigma_{\text{DP}} C \nabla \mathcal{L}(\bphi_{t})^{\top} \pmb{\eta}_{t + 1}.
    \end{split}
\end{align}
Observe that $\nabla \mathcal{L}(\bphi_{t})^{\top} \pmb{\eta}_{t + 1} \sim \mathcal{N}\left( 0, \nabla \mathcal{L}(\bphi_{t})^{\top} \mathbf{I}_d \nabla \mathcal{L}(\bphi_{t}) \right) \equiv \mathcal{N}\left( 0, \left\lVert \nabla \mathcal{L}(\bphi_{t}) \right\rVert^2 \right)$. Also, $\mathbb{E}\left[ \pmb{g}_{t+1} \mid \pmb{\phi}_t \right] = \kappa \nabla \mathcal{L}(\pmb{\phi}_t)$, and the DP noise $\pmb{\eta}_{t + 1}$ is independent of $\pmb{g}_{t + 1}$ conditioned on $\bphi_t$, and hence,
\begin{equation}\label{dp-gd-convergence-inequality-expectation-1}
    \boldee \left[ \nabla \mathcal{L}(\bphi_{t})^{\top} (\bphi_{t + 1} - \bphi_{t}) \mid \bphi_t \right] = -\lambda \kappa\left\lVert \nabla \mathcal{L}(\bphi_{t}) \right\rVert^2.
\end{equation}
Also since $\left\lVert \bphi_{t + 1} - \bphi_t \right\rVert^2 = \left\langle \bphi_{t + 1} - \bphi_t , \bphi_{t + 1} - \bphi_t  \right\rangle$, then
\begin{align}
    \begin{split}
        \left\lVert \bphi_{t + 1} - \bphi_t \right\rVert^2 &= \left\langle -\lambda \left[\pmb{g}_{t + 1} + \sigma_{\text{DP}} C \pmb{\eta}_{t + 1} \right], -\lambda \left[ \pmb{g}_{t + 1} + \sigma_{\text{DP}} C \pmb{\eta}_{t + 1} \right] \right\rangle \\
        &= \lambda^2 \left\langle \pmb{g}_{t + 1}+ \sigma_{\text{DP}} C \pmb{\eta}_{t + 1},\pmb{g}_{t + 1} + \sigma_{\text{DP}} C \pmb{\eta}_{t + 1} \right\rangle \\
        &= \lambda^2 \left( \left\lVert \pmb{g}_{t + 1} \right\rVert^2 + 2 \sigma_{\text{DP}} C \pmb{g}_{t + 1}^{\top} \pmb{\eta}_{t + 1} + \sigma_{\text{DP}}^2 C^2 \left\lVert \pmb{\eta}_{t + 1} \right\rVert^2 \right) \\
        &= \lambda^2 \left( \left\lVert \pmb{g}_{t + 1} -  \kappa\nabla \mathcal{L}(\bphi_{t}) + \kappa \nabla \mathcal{L}(\bphi_{t}) \right\rVert^2 + 2 \sigma_{\text{DP}} C \pmb{g}_{t + 1}^{\top} \pmb{\eta}_{t + 1} + \sigma_{\text{DP}}^2 C^2 \left\lVert \pmb{\eta}_{t + 1} \right\rVert^2 \right) \\
        &\leq \lambda^2 \left( 2\left\lVert \pmb{g}_{t + 1} - \kappa \nabla \mathcal{L}(\bphi_{t}) \right\rVert^2 + 2\left\lVert \kappa\nabla \mathcal{L}(\bphi_{t}) \right\rVert^2 + 2 \sigma_{\text{DP}} C \pmb{g}_{t + 1}^{\top} \pmb{\eta}_{t + 1} + \sigma_{\text{DP}}^2 C^2 \left\lVert \pmb{\eta}_{t + 1} \right\rVert^2 \right),
    \end{split}
\end{align}
and hence, from assumption \Cref{assumption-loss-lipschitz-smooth}, and the fact that $\mathcal{L}$ is Lipschitz continuous with Lipschitz constant $G$,
\begin{equation}\label{dp-gd-convergence-inequality-expectation-2}
    \boldee\left[ \left\lVert \bphi_{t + 1} - \bphi_t \right\rVert^2 \mid \bphi_t \right] \leq \lambda^2 \left(2e_{\text{sub}}^2 + 2\kappa^2 G^2 + \sigma_{\text{DP}}^2 C^2 d \right).
\end{equation}
Taking the expectation of both sides of the inequality (\ref{dp-gd-convergence-inequality-1}) conditioned on $\bphi_t$, and plugging in both (\ref{dp-gd-convergence-inequality-expectation-1}) and (\ref{dp-gd-convergence-inequality-expectation-2}), we obtain:
\begin{equation}
    \boldee\left[ \mathcal{L}(\bphi_{t + 1}) \mid \bphi_t \right] - \mathcal{L}(\bphi_{t}) \leq -\lambda\kappa \left\lVert \nabla \mathcal{L}(\bphi_{t}) \right\rVert^2 + \frac{H}{2} \left( 2e_{\text{sub}}^2 + 2\kappa^2 G^2+ \sigma_{\text{DP}}^2 C^2 d \right)  \lambda^2.
\end{equation}
Taking the expectation of both sides again, then from the law of total expectation:
\[
    \boldee\left[ \boldee\left[ \mathcal{L}(\bphi_{t + 1}) \mid \bphi_t \right] \right] = \boldee\left[ \mathcal{L}(\bphi_{t + 1}) \right],
\]
\begin{equation}
        \boldee\left[ \mathcal{L}(\bphi_{t + 1}) \right] - \boldee\left[\mathcal{L}(\bphi_{t})\right] \leq -\lambda \kappa\boldee \left[ \left\lVert \nabla \mathcal{L}(\bphi_{t}) \right\rVert^2 \right] + \frac{H}{2} \left( 2e_{\text{sub}}^2 + 2\kappa^2 G^2 + \sigma_{\text{DP}}^2 C^2 d \right)  \lambda^2.
\end{equation}
Re-arranging the terms, we obtain:
\begin{equation}
     \lambda\kappa \boldee \left[ \left\lVert \nabla \mathcal{L}(\bphi_{t}) \right\rVert^2 \right] \leq \boldee\left[\mathcal{L}(\bphi_{t})\right] - \boldee\left[ \mathcal{L}(\bphi_{t + 1}) \right] + \frac{H}{2} \left( 2e_{\text{sub}}^2 + 2\kappa^2 G^2+ \sigma_{\text{DP}}^2 C^2 d \right)  \lambda^2.
\end{equation}
Now by taking the sum of both sides from $t = 0$ to $T - 1$ and telescoping, we obtain:
\begin{equation}
    \sum_{t = 0}^{T - 1} \lambda \kappa \boldee \left[ \left\lVert \nabla \mathcal{L}(\bphi_{t}) \right\rVert^2 \right]  \leq \boldee\left[\mathcal{L}(\bphi_{0})\right] - \boldee\left[ \mathcal{L}(\bphi_{T}) \right] + \frac{H}{2} \left( 2e_{\text{sub}}^2 + 2\kappa^2 G^2+ \sigma_{\text{DP}}^2 C^2 d \right)  \sum_{t = 0}^{T - 1}\lambda^2.
\end{equation}
from assumption \Cref{assumption-bounded-below},
\begin{equation}
    M \leq  \boldee\left[ \mathcal{L}(\bphi_{T}) \right] \Longrightarrow -\boldee\left[ \mathcal{L}(\bphi_{T}) \right] \leq -M,
\end{equation}
and since $\bphi_0$ is a constant,
\begin{equation}\label{dp-gd-convergence-base-inequality}
    \sum_{t = 0}^{T - 1} \lambda \kappa \boldee \left[ \left\lVert \nabla \mathcal{L}(\bphi_{t}) \right\rVert^2 \right]  \leq \mathcal{L}\left(\bphi_{0}\right) - M + \frac{H}{2} \left( 2e_{\text{sub}}^2 + 2\kappa^2 G^2+ \sigma_{\text{DP}}^2 C^2 d \right) \sum_{t = 0}^{T - 1}\lambda^2.
\end{equation}
Since $\underset{0 \leq t \leq T - 1}{\min} \boldee \left[ \left\lVert \nabla \mathcal{L}(\bphi_{t}) \right\rVert^2 \right] \leq \boldee \left[ \left\lVert \nabla \mathcal{L}(\bphi_{t}) \right\rVert^2 \right]$, then the inequality becomes:
\begin{equation}
    \underset{0 \leq t \leq T - 1}{\min}\boldee \left[ \left\lVert \nabla \mathcal{L}(\bphi_{t}) \right\rVert^2 \right]  T \lambda\kappa \leq \mathcal{L}\left(\bphi_{0}\right) - M + \frac{H}{2} \left( 2e_{\text{sub}}^2 + 2\kappa^2 G^2+ \sigma_{\text{DP}}^2 C^2 d \right) T\lambda^2,
\end{equation}
therefore,
\begin{equation}\label{dp-gd-convergence-inequality-2}
\underset{0 \leq t \leq T - 1}{\min}\boldee \left[ \left\lVert \nabla \mathcal{L}(\bphi_{t}) \right\rVert^2 \right] \leq \frac{\mathcal{L}\left(\bphi_{0}\right) - M}{T \lambda \kappa} + \frac{H}{2\kappa} \left( 2e_{\text{sub}}^2 + 2\kappa^2 G^2+ \sigma_{\text{DP}}^2 C^2 d \right) \lambda.
\end{equation}

We can also get the same result as (\ref{dp-gd-convergence-inequality-2}) by dividing both sides of (\ref{dp-gd-convergence-base-inequality}) by $T$ and re-arranging to get:
\begin{equation}\label{dp-gd-convergence-inequality-3}
    \frac{1}{T}\sum_{t = 0}^{T - 1} \boldee \left[ \left\lVert \nabla \mathcal{L}(\bphi_{t}) \right\rVert^2 \right]  \leq \frac{\mathcal{L}\left(\bphi_{0}\right) - M}{T \lambda \kappa} + \frac{H}{2\kappa} \left( 2e_{\text{sub}}^2 + 2\kappa^2 G^2+ \sigma_{\text{DP}}^2 C^2 d \right)  \lambda,
\end{equation}
so we have the same upper bound for the average expected gradient norms and the minimum expected gradient norm. Now we want to find $\lambda$ that minimizes the right-hand side of inequality (\ref{dp-gd-convergence-inequality-2}), in other words, we want $\lambda$ such that
\begin{align*}
&\frac{d}{d \lambda}\left(\frac{\mathcal{L}\left(\bphi_{0}\right) - M}{T \kappa \lambda} + \frac{H}{2 \kappa} \left( 2e_{\text{sub}}^2 + 2\kappa^2 G^2+ \sigma_{\text{DP}}^2 C^2 d \right) \lambda\right) = 0 \\
\iff & -\frac{\mathcal{L}\left(\bphi_{0}\right) - M}{T \lambda^2} + \frac{H}{2} \left( 2e_{\text{sub}}^2 + 2\kappa^2 G^2+ \sigma_{\text{DP}}^2 C^2 d \right) = 0\\
\iff & \frac{H}{2} \left( 2e_{\text{sub}}^2 + 2\kappa^2 G^2+ \sigma_{\text{DP}}^2 C^2 d \right) = \frac{\mathcal{L}\left(\bphi_{0}\right) - M}{T \lambda^2} \\
\iff & \lambda^2 = \frac{2 \left(\mathcal{L}\left(\bphi_{0}\right) - M\right)}{H T \left( 2e_{\text{sub}}^2 + 2\kappa^2 G^2 + \sigma_{\text{DP}}^2 C^2 d \right)}.
\end{align*}
\end{proof}

Define $\lambda_{\text{opt}}$ as:
\[
    \lambda_{\text{opt}} \equiv_{\text{def}} \frac{\sqrt{2 \left(\mathcal{L}\left(\bphi_{0}\right) - M\right)}}{\sqrt{H T \left( 2e_{\text{sub}}^2 + 2\kappa^2 G^2 + \sigma_{\text{DP}}^2 C^2 d \right)}}.
\]
If $e_{\text{sub}}^2$ is insignificant compared to $\kappa^2 G^2 + \sigma_{\text{DP}}^2 C^2 d$, then we can clearly see that
\begin{equation}\label{lambda-optimal-exact}
    \lambda_{\text{opt}} \propto \frac{\sqrt{2}}{\sqrt{H T \left( \kappa^2 G^2 + \sigma_{\text{DP}}^2 C^2 d \right)}}.
\end{equation}
We choose our $\lambda$ according to (\ref{lambda-optimal-exact}); however, if we don't know about $H$ and $G$, then we can either estimate them or simply choose $\lambda_{\text{heur}}$ such that:
\begin{equation}\label{lambda-heuristic}
    \lambda_{\text{heur}} = \frac{\sqrt{2} \lambda_c}{ \sigma_{\text{DP}} C \sqrt{T d} },
\end{equation}
where $\lambda_c > 0$ is a hyper-parameter.

\section{Experiments Additional Details}\label{supplemental-section-experimental-additional-details}
\subsection{Per-Example Loss Function Approximation}
The per-example losses based on the ELBO might be difficult to compute in closed form for some problems, so generally we approximate $\ell$ by sampling from the variational distribution $q_{\text{VI}}$ and then replacing the expected values with averages. So let $\left\{\pmb{\theta}_{i}\right\}_{i = 1}^{N_{\text{VI}}}$, $N_{\text{VI}} > 0$ be samples from $q_{\text{VI}}$, then
\begin{align}\label{approximate-per-example-loss}
    \ell\left(\bphi; \pmb{x} \right) \approx \frac{1}{N_{\text{VI}}}\sum_{i = 1}^{N_{\text{VI}}} \left[\log p\left(\pmb{x} \mid \btheta_{i} \right)\right] - \frac{1}{N_{\text{VI}} N} \sum_{i = 1}^{N_{\text{VI}}}\left[\log \qVI(\btheta_i \mid \bphi) - \log p\left(\btheta_i \right)\right].
\end{align}
Throughout our experiments we use $N_{\text{VI}} = 10$.

If $p\left(\mathbf{D} \mid \pmb{\theta}_{\text{con}}\right)$ and $p(\pmb{\theta}_{\text{con}})$ require some constraints on $\pmb{\theta}$, then we would need to transform $\pmb{\theta}_{\text{con}}$ to an unconstrained domain to work with. From \citep{JMLR:v18:16-107}, we can define any diffeomorphism $\mathcal{U}: \Theta \rightarrow \mathbb{R}^{n}$ such that it transforms $\pmb{\theta}_{\text{con}}$ from the constrained domain to the unconstrained domain $\mathbb{R}^{n}$. Because $\pmb{\theta}$ is unconstrained with respect to the variational distribution, this requires an additional adjustment to (\ref{approximate-per-example-loss}),
\begin{align}\label{approximate-per-example-loss-constrained}
    \begin{split}
        \ell\left(\bphi; \pmb{x} \right) \approx&\frac{1}{N_{\text{VI}}}\sum_{i = 1}^{N_{\text{VI}}} \left[\log p\left(\pmb{x} \mid \mathcal{U}^{-1}\left(\btheta_{i}\right) \right)\right] \\
        &- \frac{1}{N_{\text{VI}} N} \sum_{i = 1}^{N_{\text{VI}}}\left[\log \qVI(\btheta_i \mid \bphi) - \log p\left(\mathcal{U}^{-1}\left(\btheta_i\right) \right)\right] \\
        & + \frac{1}{N_{\text{VI}} N} \sum_{i = 1}^{N_{\text{VI}}} \left\lvert \det J_{\mathcal{U}^{-1}} \left(\pmb{\theta}_{i}\right)\right \rvert.
    \end{split}
\end{align}
where $J_{\mathcal{U}^{-1}}\left(\pmb{\theta}_i\right)$ is the Jacobian matrix of $\mathcal{U}^{-1}$ evaluated at $\pmb{\theta}_{i}$ and $\det$ denotes the determinant.

The estimation of $\bphis$ and $\mathbf{A}$ is affected by the choice of the learning-rate, especially the estimation of $\mathbf{A}$ which we fully expand on in \cref{supplemental-section-estimation-of-hessian-learning-rate-relation}. We also establish a heuristic for the learning-rate in \cref{supplemental-section-estimation-of-hessian} which we use extensively in our experiments,
\begin{equation}\lambda_{\text{heur}} = \frac{\sqrt{2} \lambda_{c}}{ \sigma_{\text{DP}} C \sqrt{T d} }.
\end{equation}
It makes it easier to get the learning rate to the right scale and the only decision to be made is the choice of $\lambda_c$ which is a hyper-parameter. Usually, the default value ($\lambda_c = 1$) yields good results. Another value for $\lambda_c$ that we found to yield good results for other models is $\lambda_c = \sqrt{\frac{d}{2}}$.

\subsection{Gradients Preconditioning}\label{supplemental-section-gradients-preconditioning}
We apply a simple preconditioning technique by modifying (\ref{dp-sgd-definition}) so that for a vector $\pmb{\beta} = \left(\pmb{\beta}_{\pmb{\mu}}, \pmb{\beta}_{\pmb{u}}\right)$ which is the concatenation of two vectors $\pmb{\beta}_{\pmb{\mu}}$ and $\pmb{\beta}_{\pmb{u}}$ each of dimension $n$, the update rule becomes,
\begin{align}\label{modified-dp-sgd-align}
    \begin{split}
        \pmb{g}_{t + 1} &= \sum_{i \in \mathcal{B}_{t + 1}} \clip{\pmb{\beta} \odot \nabla_{\pmb{\phi}}\ell(\pmb{\phi}_t; \pmb{x}_i), C}, \\
        \widetilde{\pmb{g}}_{t + 1} &= \frac{1}{\pmb{\beta}} \odot \left[ \pmb{g}_{t + 1}  + \sigma_{\text{DP}} C \pmb{\eta}_{t + 1} \right], \\
        \pmb{\phi}_{t + 1} &= \pmb{\phi}_{t} - \lambda \widetilde{\pmb{g}}_{t + 1},
    \end{split}
\end{align}
where $\odot$ is the element-wise multiplication operation, and $\frac{1}{\pmb{\beta}}$ denotes the vector obtained from the reciprocals of the components of $\pmb{\beta}$. In our experiments, $\pmb{\beta}_{\pmb{\mu}}$ is filled with $1$s and we only choose specific values for $\pmb{\beta}_{\pmb{u}}$. We also need to modify the gradient-based (\ref{gradient-model-trace-approximate}) model so that
\begin{equation}\label{modified-gradient-based-model}
    \tildebg_{t + 1} \mid \bphi_{t}, \mathbf{A}, \bphis \sim \mathcal{N}\left(\kappa \mathbf{A} \left(\bphi_t - \bphis\right), \frac{1}{\pmb{\beta}} \odot \left( \sigma_{\text{DP}}^2 C^2 \mathbf{I}_d +
\Sigma_{\text{sub}} \right)\right),
\end{equation}
where $\frac{1}{\pmb{\beta}}\odot\Sigma_{\text{DP}}$ is the element-wise multiplication of each row-vector of $\Sigma_{\text{DP}}$ by $\frac{1}{\pmb{\beta}}$. 
Regarding the learning rate $\lambda$, we use our heuristic but multiply the heuristic with $\pmb{\beta}$ to ensure also that all the parameters converge at a similar rate,
\[
\lambda = \lambda_{\text{heur}} \pmb{\beta},
\]
and this makes lambda into a vector so that the product in the last equation in (\ref{modified-dp-sgd-align}), i.e. $\lambda \widetilde{g}_{t + 1}$ becomes an element-wise product $\lambda \odot \widetilde{g}_{t + 1}$.

\subsection{Setting the Priors}\label{supplemental-section-setting-the-priors}
We choose Gaussian distributions as priors for both $\bphis$ and each $v_i$. 
For $\bphis$ we set,
\begin{equation}
    \bphis \sim \mathcal{N}\left(\frac{1}{T - T^{*}} \sum_{t = T^{*}}^{T - 1}\bphi_{t}, \mathbf{I}_{d}\right).
\end{equation}
The prior for each $v_i$ is set based on the MLE estimate for $\mathbf{A} = \mathrm{diag}(v_1, \ldots, v_d)$ in the proof of \Cref{var-of-hessian-trace-trade-off-Theorem},
\begin{align*}
    v_i &\sim \mathcal{N}\left(\mu_{v_i}, \sigma_{v_i}^2\right), \\
    \mu_{v_i} &= \frac{\left\lvert \sum_{t=T^{*}}^{T - 1} \tildebg_{t + 1}^{(i)} \left(\bphi_t^{(i)} - \overline{\bphi}^{(i)}\right) \right \rvert}{\kappa \sum_{t=T^{*}}^{T - 1} \left(\bphi_t^{(i)} - \overline{\bphi}^{(i)}\right)^2}, \\
    \sigma_{v_i} &= \frac{\sigma_{\text{DP}}^2 C^2}{\kappa^2 \left(\pmb{\beta}^{(i)}\right)^2 \sum_{t = T^{*}}^{T - 1} \left(\bphi_{t}^{(i)} - \overline{\bphi}^{(i)} \right)^2}.
\end{align*}
where $\overline{\bphi}$ is the average of the trace after the burn-in index $\bphis$,
\[
    \overline{\bphi} = \frac{1}{T - T^{*}} \sum_{t = T^{*}}^{T - 1}\bphi_{t}.
\]

\subsection{TARP Evaluation Method}\label{supplemental-section-tarp-evaluation-method}
We can further approximate the average coverages:
\[
    C(\alpha) = \frac{1}{K}\sum_{i = 1}^{K} \boldone_{i, \alpha}\left(\btheta_i\right),
\]
by applying the TARP algorithm from \citep{pmlr-v202-lemos23a}. This is done by sampling from the approximate posterior $\tildep\left(\btheta \mid \bxi \right)$ and assuming that the credible regions are balls centered around $\bthetaRef\left(\bxi_i\right)$. Denote $\bthetaRef\left(\bxi_i\right)$ as $\pmb{\theta}_{\text{ref}(i)}$ and let $\left\{ \widetilde{\btheta}_{j, i} : 1 \leq j \leq N_{\text{tarp}}\right\}$ be samples from the approximate posterior $\tildep\left(\btheta \mid \bxi_{i} \right)$ for each $i$, then
\begin{equation}
        C(\alpha) \approx \frac{1}{K}\sum_{i = 1}^{K} \boldone\left[f_i < 1 - \alpha\right],
\end{equation}
where
\begin{equation}
    f_i = \frac{1}{N_{\text{tarp}}} \sum_{j = 1}^{N_{\text{tarp}}} \boldone\left[d\left( \widetilde{\btheta}_{j, i}, \btheta_{\text{ref}(i)}\right) < d\left( \btheta_{i}, \btheta_{\text{ref}(i)}\right) \right].
\end{equation}
The motivation for why this is a valid approximation is that:
\[
    \btheta_i \in \tildeCoverage\left(\bxi_i, \btheta_{\text{ref}(i)}\right) \iff  d\left( \btheta_{i}, \btheta_{\text{ref}(i)}\right) < r\left(\alpha, \bxi_i\right),
\]
where $r\left(\alpha, \bxi_i\right)$ is the radius of the credible region. Also, this is equivalent to
\[
 \text{Ball}\left(\btheta_{\text{ref}(i)}, d\left( \btheta_{i}, \btheta_{\text{ref}(i)}\right)\right) \subset \tildeCoverage\left(\bxi_i, \btheta_{\text{ref}(i)}\right),
\]
i.e. the ball centered around $\btheta_{\text{ref}(i)}$ with radius $d\left( \btheta_{i}, \btheta_{\text{ref}(i)}\right)$ is contained in the credible region. Therefore, it is also equivalent to
\[
    \int_{\Theta} \boldone \left[ \widetilde{\btheta} \in \text{Ball}\left(\btheta_{\text{ref}(i)}, d\left( \btheta_{i}, \btheta_{\text{ref}(i)}\right)\right)\right] \tildep \left(\widetilde{\btheta} \mid \bxi_{i} \right) < 1 - \alpha.
\]
This integral can be approximated by sampling from $\tildep\left(\widetilde{\btheta} \mid \bxi_{i} \right)$ and then using these samples to calculate the average of $\boldone \left[ \widetilde{\btheta} \in \text{Ball}\left(\btheta_{\text{ref}(i)}, d\left( \btheta_{i}, \btheta_{\text{ref}(i)}\right)\right)\right]$. Also,
\begin{align*}
    & \widetilde{\btheta} \in \text{Ball}\left(\btheta_{\text{ref}(i)}, d\left( \btheta_{i}, \btheta_{\text{ref}(i)}\right)\right) \\
    \iff & \boldone \left[ d\left(\widetilde{\btheta} ,\btheta_{\text{ref}(i)}  \right) <  d\left( \btheta_{i}, \btheta_{\text{ref}(i)}\right)  \right],
\end{align*}
and thus the approximation holds. One important thing to consider here, is that if $\pmb{\theta} \in \Theta$ is constrained, we must have a diffeomorphism $\mathcal{U}: \Theta \rightarrow \mathbb{R}^{n}$ that transforms the prior samples $\pmb{\theta}_{i}$ to the unconstrained space $\mathcal{U}\left(\pmb{\theta}_{i}\right)$. Since $\mathcal{U}$ is a diffeomorphism and thus both injective and surjective, it must preserve the relation $\in$.
Hence,
\[
    \pmb{\theta} \in X \iff \mathcal{U}\left(\pmb{\theta}\right) \in \mathcal{U}\left(X\right)
\]
similarly if $\pmb{\theta}' \in \mathbb{R}^{n}$, then
\[
    \pmb{\theta}' \in Y \iff \mathcal{U}^{-1}\left(\pmb{\theta}'\right) \in \mathcal{U}^{-1}\left(Y\right)
\]

According to Theorem 3 of \citep{pmlr-v202-lemos23a}, if $\widetilde{p}\left(\pmb{\theta} \mid \mathcal{T} \right)$ had the correct coverages, then the marginals $\widetilde{p}\left(\pmb{\theta}^{(i)} \mid \mathcal{T} \right)$ should also have correct coverages. We can also use the TARP algorithm to calculate the coverages for each dimension of the parameters vector $\pmb{\theta}$, i.e., the marginal coverages $\widetilde{p}\left(\pmb{\theta}^{(i)} \mid \mathcal{T}\right)$ where $\pmb{v}^{(i)}$ denotes the $i$th component of a vector $\pmb{v}$.

\subsection{Exponential Families Experiment Details}\label{supplemental-section-exponential-families-constrained-optimization}
For the constrained optimization problem of \ref{expfam-1}, $\pmb{\theta}_{\text{con}} > 0$, so we define
\begin{equation}\label{softplus-conditioning}
\mathcal{U}_1\left(\pmb{\theta}_{\text{con}}\right) = \log(e^{\pmb{\theta}_{\text{con}}} - 1) \implies \mathcal{U}_1^{-1}\left(\pmb{\theta}\right) = \text{softplus}\left(\pmb{\theta}\right).
\end{equation}
For Model \ref{expfam-2}, $\pmb{\theta}_{\text{con}} \in (0, 1)$, so we define
\[
\mathcal{U}_2\left(\pmb{\theta}_{\text{con}}\right) = \log \left(\frac{\pmb{\theta}_{\text{con}}}{1 - \pmb{\theta}_{\text{con}}}\right) \implies \mathcal{U}_2^{-1}\left(\pmb{\theta}\right) = \frac{1}{1 + e^{-\pmb{\theta}}}.
\]
Finally, for Model \ref{expfam-3}, let $\pmb{\theta}_{\text{con}} = \left(\theta_1, \theta_2, \theta_3\right)$, then $\theta_1 + \theta_2 + \theta_3 = 1$ so we only have two degrees of freedom and can let $\theta_3 = 1 - \theta_1 - \theta_2$. We define,
\[
\mathcal{U}_3\left(\pmb{\theta}_{\text{con}}\right) = \left(\log\left(\frac{\theta_1}{\theta_3}\right), \log\left(\frac{\theta_2}{\theta_3}\right), 0\right).
\]
For the inverse, let $\pmb{\theta} = \left(\theta_1', \theta_2'\right)$, then
\[
\mathcal{U}_3^{-1}\left(\pmb{\theta}\right) = \text{softmax}\left(\theta_1', \theta_2', 0\right).
\]

\subsection{UCI Adult Experiment Details}\label{uci-adult-experiment-details}
For data pre-processing, we removed the columns ``education-num'', ``native-country'', and ``relationship'', and converted categorical values to one-hot encoded vectors. The continuous values were normalized to be within $(0, 1)$. Moreover, we evaluated our method for $\epsilon \in \{0.1, 0.3, 1.0\}$. Across these different values of $\epsilon$, we used the same number of iterations $T = 10^{4}$ and a sampling rate of $\kappa = 0.1$.

\subsection{NA-DPVI Algorithm}\label{supplemental-section-algorithm-blocks}
\algdef{SE}[SUBALG]{Indent}{EndIndent}{}{\algorithmicend\ }%
\algtext*{Indent}
\algtext*{EndIndent}

The variables that are used in all the algorithms are found in \cref{alg:global-variables}. The DPVI algorithm that we used can be found in \cref{alg:dpvi-algorithm} from which the parameter and gradient traces ($\mathcal{T}$, $\widetilde{\mathcal{G}}$) are obtained. Finally, our NA-DPVI algorithm can be found in \cref{alg:na-dpvi}.

The time complexity of DPVI (\cref{alg:dpvi-algorithm}) is $\mathcal{O}(T\times B \times d)$ where $T$ is the number of steps of DP-SGD, $B = \kappa N$ is the expected batch size ($\kappa$ is the subsampling ratio and $N$ is the number of samples), and $d$ is dimensionality of $\pmb{\phi}$ (i.e. parameters). 

The time complexity of NA-DPVI (\cref{alg:na-dpvi}) has an additional cost of sampling $M$ from the posterior $p\left(\pmb{\phi}^{*}, \mathbf{A}, \Sigma_{\text{sub}} \mid \widetilde{\mathcal{G}}, \mathcal{T}\right)$, which depends on the method of approximating the noise-aware posterior \cref{approximate-noise-aware-posterior} (e.g., NUTS and Laplace). The complexity of the noise-aware posterior approximation method is a function $M$, $d$, and $T$ since the approximation method takes as its input the trace $\mathcal{T}$ and perturbed gradients $\widetilde{\mathcal{G}}$.

\begin{algorithm}
    \begin{algorithmic}
        \State $\left(\pmb{x}_i\right)_{i = 1}^{N}:$ data;
        \State $(\epsilon, \delta):$ DP privacy level;
        \State $C:$ clipping threshold;
        \State $T :$ training iterations;
        \State $\lambda :$ learning rate;
        \State $\kappa :$ Poisson sampling rate;
        \State $\pmb{\beta} :$ gradients preconditioning vector; \Comment{see \cref{supplemental-section-gradients-preconditioning}, for more details about preconditioning.}
        \State $\pmb{\phi}_0 :$ initial values for $\pmb{\phi}$;
        \State $\sigma_{\text{DP}} :$ PRVAccountant$(\epsilon, \delta, T, \kappa)$; \Comment{see \citet{PRVAccountantGopi2021}.}
    \end{algorithmic}
    \caption{\textbf{Global variables}}
    \label{alg:global-variables}
\end{algorithm}
\begin{algorithm}
    \begin{algorithmic}[1]
        \For {$t = 1,\ldots,T - 1$}
            \State $\pmb{g}_{t + 1} \gets \sum_{i \in \mathcal{B}_{t + 1}} \clip{\pmb{\beta} \odot \nabla_{\pmb{\phi}}\ell(\pmb{\phi}_t; \pmb{x}_i), C}$; \Comment{see \cref{supplemental-section-gradients-preconditioning}, for more details about preconditioning.}
            \State Sample $\pmb{\eta}_{t + 1} \sim \mathcal{N}(0, \mathbf{I}_d)$;
            \State $\widetilde{\pmb{g}}_{t + 1} \gets \frac{1}{\pmb{\beta}} \odot \left[\pmb{g}_{t + 1}  + \sigma_{\text{DP}} C \pmb{\eta}_{t + 1}\right]$;
            \State $\pmb{\phi}_{t + 1} \gets \pmb{\phi}_{t} - \lambda \widetilde{\pmb{g}}_{t + 1}$;
        \EndFor
        \State \textbf{return} $\mathcal{T} = \left( \pmb{\phi}_t \right)_{t = 0}^{T}$, $\widetilde{\mathcal{G}} = \left(\widetilde{\pmb{g}_t}\right)_{t = 1}^{T}$;
    \end{algorithmic}
    \caption{DPVI}
    \label{alg:dpvi-algorithm}
\end{algorithm}
\begin{algorithm}
    \begin{algorithmic}[1]
    \State \textbf{Input}:
        \Indent 
            \State $M:$ number of samples;
            \State $\pmb{\mu}_{\pmb{\phi}^*} : \pmb{\phi}^*$ prior distribution mean;
            \State $\mathbf{\Sigma}_{\pmb{\phi}^*} : \pmb{\phi}^*$ prior distribution covariance matrix;
            \State $\pmb{\mu}_{\mathbf{A}} : \mathbf{A}$ entries prior distribution mean;
            \State $\mathbf{\Sigma}_{\mathbf{A}} : \mathbf{A}$ entries prior distribution covariance matrix;
            \State $\pmb{\mu}_{\Sigma_{\text{sub}}} : \Sigma_{\text{sub}}$ entries prior distribution mean;
            \State $\mathbf{\Sigma}_{\Sigma_{\text{sub}}} : \Sigma_{\text{sub}}$ entries prior distribution covariance matrix;
            \State $\mathcal{T}, \widetilde{\mathcal{G}} :$ DPVI output;
        \EndIndent
        
        \State \textbf{model-definition} $p\left(\widetilde{\mathcal{G}} \mid \mathcal{T}, \pmb{\phi}^{*}, \mathbf{A}, \Sigma_{\text{sub}}\right)$
        \Indent 
        \State $\pmb{\phi}^* \sim \mathcal{N}(\pmb{\mu}_{\pmb{\phi}^*}, \mathbf{\Sigma}_{\pmb{\phi}^*})$;
        \State $\mathbf{A} \sim \mathcal{N}(\pmb{\mu}_{\mathbf{A}}, \mathbf{\Sigma}_{\mathbf{A}})$;
        \State $\Sigma_{\text{sub}} \sim \mathcal{N}(\pmb{\mu}_{\Sigma_{\text{sub}}}, \mathbf{\Sigma}_{\Sigma_{\text{sub}}})$;
        \State $\tildebg_{t + 1} \mid \bphi_{t}, \mathbf{A}, \bphis, \Sigma_{\text{sub}} \sim \mathcal{N}\left(\kappa \mathbf{A} \left(\bphi_t - \bphis\right), \frac{1}{\pmb{\beta}} \odot \left(\sigma_{\text{DP}}^2 C^2 \mathbf{I}_d +
\Sigma_{\text{sub}}\right)\right)$; \Comment{see \cref{modified-gradient-based-model}.}
        \EndIndent 
        \State \textbf{end model-definition}
        \State Sample $\left(\pmb{\phi}^*_{i}, \mathbf{A}_{i}, \Sigma_{\text{sub}, i}\right)_{i = 1}^{M} \sim p\left(\pmb{\phi}^{*}, \mathbf{A}, \Sigma_{\text{sub}} \mid \widetilde{\mathcal{G}}, \mathcal{T}\right)$; \Comment{sampling using any approximate inference method.}
        \State \textbf{return} $\widetilde{p}(\pmb{\theta} \mid \mathcal{T}) = \frac{1}{M} \sum_{i = 1}^{M} q_{\text{VI}}(\btheta; \bphis_{i})$; \Comment{approximate noise-aware posterior mixture model.}
    \end{algorithmic}
    \caption{NA-DPVI}
    \label{alg:na-dpvi}
\end{algorithm}

\subsection{Computational Resources}\label{supplemental-computational-resources}
We used the computational resources offered by CSC – IT Center for Science, Finland. In particular, the Puhti supercomputer was used to run experiments on CPU nodes in parallel. Each Puhti node is equipped with two Intel Xeon processors, code name Cascade Lake, with 20 cores each running at 2.1 GHz. 

\end{document}